%% file: main.tex
\documentclass[12pt]{article}

\usepackage{acro}
\usepackage{amsmath}
\usepackage{amssymb}
\usepackage{amsthm}
\usepackage{bm}
\usepackage{enumitem}
\usepackage{fullpage}
\usepackage{hyperref}
\usepackage{natbib}
\usepackage{xcolor}
\usepackage{graphicx} 
\usepackage{url}
\usepackage{booktabs}
\usepackage{cleveref}
\usepackage{multirow}
\usepackage{subcaption}
\usepackage{sidecap, caption}
\usepackage{booktabs}
\usepackage{colortbl}
\usepackage{placeins}

\input{macros}

\begin{document}

\title{A Distributional Optimisation Perspective on Combining Models in Deep Learning}
\author{Congye Wang$^{\star}$, Yan Lin$^{\star}$, Zheyang Shen, \\
Matthew A. Fisher, Chris. J. Oates \\
\small $^\star$ equal contribution \\
\small Newcastle University, UK 
}

\maketitle

\begin{abstract}
\input{abstract}
\end{abstract}

\input{body}

\paragraph{Acknowledgements}
\input{acks}

\appendix
\input{appendix}
\FloatBarrier
\newpage

\bibliographystyle{abbrvnat}
\bibliography{bibliography}

\end{document}

%% file: macros.tex
\DeclareMathOperator*{\argmin}{arg\,min}

\DeclareAcronym{KL}{short = KL, long = Kullback--Leibler}
\DeclareAcronym{LA}{short = LA, long = LoRA averaging}
\DeclareAcronym{LLM}{short = LLM, long = large language model}
\DeclareAcronym{LoRA}{short = LoRA, long = low-rank adapter}
\DeclareAcronym{MFLD}{short = MFLD, long = mean-field Langevin dynamics}
\DeclareAcronym{MFNN}{short = MFNN, long = mean-field neural network}
\DeclareAcronym{MoE}{short = MoE, long = mixtures of experts}
\DeclareAcronym{OA}{short = OA, long = output averaging}
\DeclareAcronym{ODE}{short = ODE, long = ordinary differential equation}
\DeclareAcronym{PA}{short = PA, long = parameter averaging}
\DeclareAcronym{RKHS}{short = RKHS, long = reproducing kernel Hilbert space}
\DeclareAcronym{VGD}{short = VGD, long = variational gradient descent}
\DeclareAcronym{FVGD}{short = FVGD, long = functional variational gradient descent}
\DeclareAcronym{MLP}{short = MLP, long = multi-layer perceptron}
\DeclareAcronym{PoC}{short = PoC, long = propagation of chaos}

\definecolor{yellowbox}{RGB}{255, 255, 0} 

\newtheorem*{proposition*}{Proposition}

\newtheorem{remark}{Remark}

%% file: abstract.tex
Combining predictions from different models can improve performance at machine learning tasks, but the training of the individual models and the rule used to combine them are typically chosen separately, and by \emph{ad hoc} means.
Recent advances in \emph{distributional optimisation} (i.e. where the optimisation occurs over the set of probability distributions) offer an opportunity for principled \emph{joint} training, viewing the collection of models as a discrete distribution whose support points are to be optimised, but the potential of these methods is not well-understood.
In this paper we (1) cast two standard combination strategies -- ensembles and low-rank adapter averaging -- as entropy-regularised distributional optimisation, observing that the resulting objective is convex in the ensemble case but not in the adapter-averaging case, so that existing convergence guarantees for \ac{MFLD} transfer only to the former;
(2) assess existing and novel algorithms for this task, including a functional variant of \ac{VGD};
and (3) report an empirical study spanning synthetic classification tasks and fine-tuning of large language models on a commonsense reasoning benchmark.

%% file: body.tex
\section{Introduction}
\label{sec: intro}

Ensemble methods and mixtures of experts aim to improve predictive performance, robustness, and uncertainty estimation by combining multiple models rather than relying on a single model.
Ensemble methods typically aggregate the outputs of independently trained models -- through averaging, voting, or stacking -- to reduce variance and mitigate overfitting, building on ideas from classical machine learning such as bagging and boosting \citep{Breiman1996,dietterich2000ensemble}. 
Mixtures of experts take a more structured approach, learning a set of specialised expert networks along with a gating mechanism that dynamically selects or weights experts based on the input \citep{Jacobs1991, Jordan1994}. 
This allows different experts to focus on distinct regions of the input space or subtasks, improving efficiency and expressiveness. 
Both approaches leverage model diversity to achieve better generalisation and have been widely applied in areas such as vision, natural language processing, and large-scale recommendation systems \citep{Lakshminarayanan2017, Shazeer2017}.

Despite their effectiveness, the training of individual models and their combination in ensemble methods and mixtures of experts is often guided by heuristic or \emph{ad hoc} design choices. 
Models may be trained independently with different random initialisations, architectures, or data subsets without a principled objective that explicitly accounts for their eventual combination \citep{dietterich2000ensemble}. 
Similarly, the choice of aggregation strategy -- such as simple averaging, fixed weighting, or a particular gating architecture -- is frequently motivated by empirical performance rather than theoretical guarantees. 
As a result, while these methods work well in practice, their design and optimisation can lack a unified, systematic framework, and the interaction between training procedures and combination rules remains an active area of research.
The aim of this paper is to explore whether distributional optimisation can provide such a framework.

\subsection{Combining Models in Deep Learning}
\label{sec: context}

Consider learning a function (or \emph{model}) $f$, capable of transforming inputs $\mathbf{x}$ into appropriate outputs $f(\mathbf{x})$ (e.g. logits for a classification task, or a point estimate for a regression task).
In this section we review two of the main strategies in which $f$ is constructed, in turn, from a discrete set of candidate models $\{f_i\}_{i=1}^m$.
Other strategies are discussed in \Cref{app: other approaches}.

\paragraph{Ensembles} 

Assuming each model $f_i$ produces output $f_i(\mathbf{x})$ in a common vector space, such as $\mathbb{R}^d$, one can construct an \emph{ensemble}  \citep{dietterich2000ensemble}
\begin{align}
f(\mathbf{x}) = \sum_{i=1}^m w_i f_i(\mathbf{x}) , \label{eq: output ave}
\end{align}
i.e. a weighted average of the outputs from each model.
Remarkably, even a simple uniform average (i.e. $w_i = \frac{1}{m}$) of models trained with different hyperparameters (e.g. learning rates) can empirically improve performance \citep{gontijono} and offer improved robustness to distribution shift \citep{ovadia2019can}.
However, a uniform average may be sub-optimal.
Accordingly, several strategies have been proposed for learning appropriate weights $w_i$, with a benefit from non-uniform weighting reported \citep{caruana2004ensemble,caruana2006getting,levesque2016bayesian,wenzel2020hyperparameter}.
A limiting instance of this approach is to use what is believed to be the single best-performing model.
The main limitation of ensembles is that \emph{ad hoc} strategies are typically used to determine the candidate model set $\{f_i\}_{i=1}^m$, so that in practice some models may receive little or no weight, meaning that the computational effort involved in training these models has been wasted.

\paragraph{LoRA Averaging}

\textit{Ab initio} training can incur a high computational cost; instead, techniques such as \emph{low-rank adapters} (\acs{LoRA}) are often used to fine-tune a foundational model \citep{hu2022lora}.
Recall that a deep neural network contains many dense layers, where in each layer the inputs are multiplied by a \emph{weight matrix} $\mathbf{W} \in \mathbb{R}^{d \times k}$.
The idea of \ac{LoRA} is to replace the weight matrix $\mathbf{W}$ of the reference agent by $\mathbf{W} + \mathbf{B} \mathbf{A}$ where $\mathbf{B} \in \mathbb{R}^{d \times r}$, $\mathbf{A} \in \mathbb{R}^{r \times k}$, and $r \ll \min\{d,k\}$. 
From this construction, $\mathbf{B} \mathbf{A}$ has a low rank.
The pair $(\mathbf{A},\mathbf{B})$ is called the \emph{adapter} for that layer, and the benefit of \ac{LoRA} is that each layer has only a (relatively) small number parameters that need to be learned during fine-tuning.
For simplicity, consider the case in which a single weight matrix $\mathbf{W}$ is fine-tuned. 
Supposing that we have a collection of fine-tuned models $f_i(\mathbf{x}) = f(\mathbf{x}, \mathbf{W} + \mathbf{B}_i \mathbf{A}_i)$, where $(\mathbf{A}_i,\mathbf{B}_i)$ is the adapter for the $i$th fine-tuned model, a simple approach to combining these models is \emph{LoRA averaging},
\begin{align}
f(\mathbf{x}) = f\left( \mathbf{x} , \mathbf{W} + \sum_{i=1}^m w_i \mathbf{B}_i \mathbf{A}_i \right) , \label{eq: LoRA ave}
\end{align}
where the adapter in \eqref{eq: LoRA ave} has rank at most $mr$, and the weights $w_i$ can potentially be optimised \citep{li2025efficient}.
Depending on the nature of the neural network, only a subset of the weight matrices may be adapted; for example, \citet{hu2022lora} considered the transformer architecture and adapted only the weight matrices in the self-attention module, leaving the weights in the multilayer perceptron unchanged.

\subsection{Our Contributions}

Despite formal hierarchical modelling principles being well-understood, in practice models are often independently trained before being combined.
In instances where models are jointly trained, such training is often \emph{ad hoc} due to the challenges associated with optimisation in higher-dimensional spaces when multiple models are considered.
This poses a barrier to methodological development and results in wasteful computation, since models may be trained whose contribution to the final prediction may be minimal.
However, a powerful idea that is relatively unexplored in this context is to `lift' the optimisation problem from the parameter space $\bm{\theta} \in \mathbb{R}^p$ to the space of probability distributions $\mathcal{P}(\mathbb{R}^p)$, in principle enabling an infinite number of potential models to be simultaneously considered.

Inspired by recent advances in \emph{distributional optimisation}, the aim of this work is to explore casting model combination as an optimisation task 
\begin{align}
    \argmin_{Q \in \mathcal{P}(\mathbb{R}^p)} \; \mathcal{J}(Q), \qquad \mathcal{J}(Q) = \mathcal{L}(Q) + \lambda \mathcal{E}(Q) , \label{eq: dist opt}
\end{align}
where $Q$ is a distribution over the candidate model (parameter) set, $\mathcal{L}(Q)$ is a \emph{loss function} capturing the performance of the combined model, $\lambda$ controls the amount of regularisation used, and $\mathcal{E}(Q)$ is the (negative) entropy of $Q$.
Our specific contributions are:
\begin{itemize}[leftmargin=*]
\itemsep0em 
    \item to formulate appropriate loss functions for ensembles and \ac{LoRA} averaging (cf. \Cref{sec: context}), so that the performance of the combined model can be explicitly optimised. 
    \item to explore algorithms suitable for entropy-regularised objectives such as \eqref{eq: dist opt}, including a novel \emph{functional} variant of variational gradient descent, for which we report a carefully diagnosed negative result;
    \item to empirically assess these methods' potential, including in challenging applications to \acp{LLM}.
\end{itemize}
It is important to emphasise that our aim is not to match the state-of-the-art; it is to objectively assess the potential of formalising model combination within a distributional optimisation framework.

\subsection{Related Work}

Our inspiration comes from \acp{MFNN}, a theoretical tool to understand the dynamics of gradient descent applied to networks with a single hidden layer \citep{nitanda2017stochastic,mei2018mean,chizat2018global}.
Let $\Phi$ be a single neuron, for example $\Phi(\mathbf{x} , \bm{\theta}) = \mathrm{ReLU}(\mathbf{W} \mathbf{x} + \mathbf{b})$ where the parameter $\bm{\theta} \in \mathbb{R}^p$ collects together weights $\mathbf{W}$ and biases $\mathbf{b}$.
The corresponding \ac{MFNN} $f(\mathbf{x}) = \int \Phi(\mathbf{x} , \bm{\theta}) \; \mathrm{d}Q(\bm{\theta})$ generalises from a finite number of neurons (when $Q$ has finite support) to the case where there are a possibly infinite number of neurons in a single hidden layer of the network.
In effect, the distribution $Q$ is the `parameter' of the \ac{MFNN}, and training can be conceptualised as optimisation over $Q \in \mathcal{P}(\mathbb{R}^p)$.
This insight enabled detailed theoretical analyses, such as \citet{nitanda2025propagation}, and also underpins the present work. 
However, the potential of distributional optimisation applied to \emph{deep} learning tasks remains poorly understood, motivating the present work.

\section{Methods}

After introducing our setting and notation in \Cref{subsec: setup}, we formulate model combination as a distributional optimisation task in \Cref{subsec: joint train} and discuss both existing and novel algorithms in \Cref{sec: algorithms}.

\subsection{Set-Up and Notation}
\label{subsec: setup}

\paragraph{Assumptions on the Model}
For this work we consider a model $f(\cdot , \bm{\theta}) : \mathbb{R}^d \rightarrow \mathbb{R}^e$ parametrised by $\bm{\theta} \in \mathbb{R}^p$.
It will be assumed that $f(\mathbf{x},\bm{\theta})$ is differentiable with respect to $\bm{\theta}$ at each fixed $\mathbf{x} \in \mathbb{R}^d$. 

\paragraph{Assumptions on the Learning Task}
To limit scope we focus on supervised learning tasks, where each datum is associated to a \emph{label} taking values in a set $\mathcal{Y}$.
Denote the training dataset $\{(\mathbf{x}_j,\mathbf{y}_j)\}_{j=1}^n \subset \mathbb{R}^d \times \mathcal{Y}$. 
Let $L : \mathcal{Y} \times \mathbb{R}^e \rightarrow \mathbb{R}$ be a loss function such that $L(\mathbf{y},f(\mathbf{x}))$ measures the loss incurred by using the model output $f(\mathbf{x})$ when the true label is $\mathbf{y} \in \mathcal{Y}$.
It will be assumed that $L$ is differentiable with respect to its second argument.

\paragraph{Probabilistic Notation}
Let $\mathcal{P}(\mathbb{R}^p)$ denote the set of (Borel) probability distributions on $\mathbb{R}^p$.
For $Q \in \mathcal{P}(\mathbb{R}^p)$, denote the (negative) entropy $\mathcal{E}(Q) = \int q(\bm{\theta}) \log q(\bm{\theta}) \; \mathrm{d}\bm{\theta}$ if $Q$ has density $q$ on $\mathbb{R}^p$, and $\infty$ otherwise.
Let $\delta_{\bm{\theta}} \in \mathcal{P}(\mathbb{R}^p)$ denote a point mass at $\bm{\theta} \in \mathbb{R}^p$, so that $Q_m = \frac{1}{m} \sum_{i=1}^m \delta_{\bm{\theta}_i}$ is the empirical distribution associated with the set $\{\bm{\theta}_i\}_{i=1}^m \subset \mathbb{R}^p$.
Let $T_\# Q$ denote the pushforward of the distribution $Q$ under the map $T$, i.e. $(T_\# Q)(S) = Q(T^{-1}(S))$ where $T^{-1}(S) = \{x : T(x) \in S\}$.

\paragraph{Gradient Notation}
For $F : \mathbb{R}^p \rightarrow \mathbb{R}$ and $\mathbf{F} : \mathbb{R}^p \rightarrow \mathbb{R}^p$, let $\nabla F$ denote the gradient of $F$ and let $\nabla \cdot \mathbf{F}$ denote the divergence of $\mathbf{F}$.
For $\mathcal{F} : \mathcal{P}(\mathbb{R}^p) \rightarrow \mathbb{R}$ and $Q \in \mathcal{P}(\mathbb{R}^p)$, the \emph{variational gradient} $\nabla_V \mathcal{F}(Q) : \mathbb{R}^p \rightarrow \mathbb{R}^p$ of $\mathcal{F}$ at $Q$, if it exists, is defined as the Euclidean gradient of the first variation $\mathcal{F}'(Q)$ of $\mathcal{F}$ at $Q$; i.e. $\nabla_V \mathcal{F}(Q)(\bm{\theta}) = \nabla_{\bm{\theta}} \mathcal{F}'(Q)(\bm{\theta})$ for each $\bm{\theta} \in \mathbb{R}^p$.

\subsection{Training as Distributional Optimisation}
\label{subsec: joint train}

Our starting point is the observation that the strategies for combining models from \Cref{sec: context} can be cast as minimisation of an appropriate loss function $\mathcal{L} : \mathcal{P}(\mathbb{R}^p) \rightarrow \mathbb{R}$ defined on the set of probability distributions $\mathcal{P}(\mathbb{R}^p)$.
To limit scope we assume the models $f_i$ are instances of the same architecture, differing only in their parameters, i.e. $f_i(\mathbf{x}) = f(\mathbf{x} , \bm{\theta}_i)$.
In the case of ensembles, the performance of the combined model on the training dataset can be captured by the distributional loss function
    \begin{align}
        \mathcal{L}(Q) = \sum_{j=1}^n L\left( \mathbf{y}_j , \int f(\mathbf{x}_j , \bm{\theta}) \; \mathrm{d}Q(\bm{\theta}) \right) .  \label{eq: L ensemble}
    \end{align}
    Indeed, taking $Q$ equal to $Q_m = \sum_{i=1}^m w_i \delta_{\bm{\theta}_i}$ recovers the ensemble model \eqref{eq: output ave}.
    Under mild assumptions (cf. \Cref{rem: well-posed}), direct optimisation of $\mathcal{L}$ over $\mathcal{P}(\mathbb{R}^p)$ is well-posed.
    Further, the minimiser will typically have more than one element in its support; this is because the ensemble prediction $\int f(\cdot , \bm{\theta}) \; \mathrm{d}Q(\bm{\theta})$ is a convex combination of models, and thus more expressive than any individual instance of the model.
    However, the support of the minimiser will typically be a finite set; see \citet[][e.g. Theorem 21 in Chapter 5]{lindsay1995mixture}.
    This discreteness renders direct optimisation of $\mathcal{L}$ extremely difficult.
As a second example, we can lift \ac{LoRA} averaging to a distributional optimisation task by identifying $\bm{\theta} = (\mathbf{A},\mathbf{B})$ and setting
    \begin{align} 
        \mathcal{L}(Q) \hspace{-3pt} = \hspace{-3pt} \sum_{j=1}^n L \left( \mathbf{y}_j ,  f\left( \mathbf{x}_j , \mathbf{W} \hspace{-3pt} + \hspace{-3pt} \int \mathbf{B} \mathbf{A} \; \mathrm{d}Q(\mathbf{A},\mathbf{B}) \right) \hspace{-3pt} \right)   \label{eq: loss LoRA}
    \end{align}
    so that $Q$ is a distribution on $\mathbb{R}^{r \times k} \times \mathbb{R}^{d \times r}$, and an $m$-particle discretisation $Q_m = \sum_{i=1}^m w_i \delta_{(\mathbf{A}_i,\mathbf{B}_i)}$ of $Q$ corresponds to using a rank-$mr$ adapter as in \eqref{eq: LoRA ave}.
    The case where $r = 1$ was considered in \citet{nitanda2025propagation}.
    Again, direct optimisation of $\mathcal{L}$ in this case is computationally intractable due to the discrete support of the minimising distribution $Q$.

To address the  difficulties with optimisation of $\mathcal{L}$, we can consider additional regularisation with (negative) entropy with $\lambda > 0$ as in \eqref{eq: dist opt}. 
The entropy term ensures that the minimiser is absolutely continuous with respect to Lebesgue measure; we empirically investigate the benefit of entropic regularisation in \Cref{sec: experiments}.

Although \eqref{eq: L ensemble} and \eqref{eq: loss LoRA} are structurally similar, they differ in a respect that matters for the numerical methods of \Cref{sec: algorithms}:

\begin{proposition*}[Convexity of $\mathcal{J}$]
\label{prop: convexity}
Suppose that, for each $j$, the map $\mathbf{u} \mapsto L(\mathbf{y}_j , \mathbf{u})$ is convex on $\mathbb{R}^e$.
Then $\mathcal{L}$ in \eqref{eq: L ensemble} is convex on $\mathcal{P}(\mathbb{R}^p)$, and for $\lambda > 0$ the objective $\mathcal{J}$ in \eqref{eq: dist opt} is strictly convex on $\mathcal{P}(\mathbb{R}^p)$.
\end{proposition*}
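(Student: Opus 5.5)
The plan is to prove convexity along linear (mixture) interpolations $Q_t = (1-t)Q_0 + tQ_1$, $t \in [0,1]$, which is the notion of convexity on $\mathcal{P}(\mathbb{R}^p)$ relevant here. I would treat $\mathcal{L}$ and $\mathcal{E}$ separately and then combine them.

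\textbf{Step 1: convexity of $\mathcal{L}$.} The key observation is that the inner map $Q \mapsto \int f(\mathbf{x}_j,\bm{\theta})\,\mathrm{d}Q(\bm{\theta})$ is affine in $Q$:
\begin{align*}
\int f(\mathbf{x}_j,\bm{\theta})\,\mathrm{d}Q_t(\bm{\theta}) = (1-t)\int f(\mathbf{x}_j,\bm{\theta})\,\mathrm{d}Q_0(\bm{\theta}) + t \int f(\mathbf{x}_j,\bm{\theta})\,\mathrm{d}Q_1(\bm{\theta}).
\end{align*}
This needs the integrals to be finite. I would either restrict to the domain where they exist, with $\mathcal{L} = +\infty$ elsewhere, or appeal to the well-posedness assumptions of \Cref{rem: well-posed}; this bookkeeping is the only technical care point. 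A convex function composed with an affine map is convex, so each summand $L(\mathbf{y}_j,\cdot)$ evaluated at the ensemble output is convex in $Q$. Summing over $j$ preserves convexity, so $\mathcal{L}$ is convex.

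\textbf{Step 2: strict convexity of $\mathcal{E}$.} If $Q_0$ or $Q_1$ lacks a density, the inequality $\mathcal{E}(Q_t) \le (1-t)\mathcal{E}(Q_0) + t\mathcal{E}(Q_1)$ holds trivially, since the right-hand side is $+\infty$. Otherwise the densities satisfy $q_t = (1-t)q_0 + tq_1$. The function $\phi(s) = s\log s$ is strictly convex on $[0,\infty)$, so pointwise
\begin{align*}
\phi(q_t) \le (1-t)\phi(q_0) + t\phi(q_1),
\end{align*}
with strict inequality wherever $q_0 \ne q_1$. Integrating gives convexity. When $Q_0 \ne Q_1$ and $t \in (0,1)$, the set $\{q_0 \neq q_1\}$ has positive Lebesgue measure, which upgrades the integrated inequality to a strict one.

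\textbf{Step 3: combining.} Since $\lambda > 0$, $\mathcal{J} = \mathcal{L} + \lambda\mathcal{E}$ is a convex function plus a positive multiple of a strictly convex one, hence strictly convex. Strictness is only meaningful on the effective domain, where $\mathcal{J}$ is finite.

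\textbf{Main obstacle.} I expect the hardest step to be making strict convexity of $\mathcal{E}$ rigorous when $\int q\log q$ may not be absolutely integrable. The issue is that $q\log q$ is unbounded below on $\mathbb{R}^p$, so the entropy can fail to be well-defined. I would handle this by restricting to the set where $\mathcal{J}$ is finite. On that set I would argue through the nonnegative integrand
\begin{align*}
(1-t)\phi(q_0) + t\phi(q_1) - \phi(q_t) \ge 0,
\end{align*}
whose integral is well-defined and strictly positive when $Q_0 \ne Q_1$. Alternatively, one could write it as $(1-t)\,\mathrm{KL}(Q_0\|Q_t) + t\,\mathrm{KL}(Q_1\|Q_t)$ and use the fact that KL vanishes only for equal measures.
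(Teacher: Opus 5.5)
Your proposal is correct and follows the paper's own route. Like the paper, you compose the affine map $Q \mapsto \int f(\mathbf{x}_j,\bm{\theta})\,\mathrm{d}Q(\bm{\theta})$ with the convex $L(\mathbf{y}_j,\cdot)$, sum over $j$, and add $\lambda$ times the strictly convex $\mathcal{E}$. You also justify the strict convexity of $\mathcal{E}$, which the paper only asserts, via $s\log s$ or the KL identity, and you correctly restrict strictness to the effective domain where $\mathcal{J}$ is finite; the paper's wording ``strictly convex on $\mathcal{P}(\mathbb{R}^p)$'' glosses over this caveat.
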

\begin{proof}
The map $Q \mapsto \int f(\mathbf{x}_j , \bm{\theta}) \, \mathrm{d}Q(\bm{\theta})$ is affine in $Q$, and the composition of a convex function with an affine map is convex; a finite sum of convex functions is convex, so $\mathcal{L}$ is convex on the convex set $\mathcal{P}(\mathbb{R}^p)$.
Since $\mathcal{E}$ is strictly convex, $\mathcal{J}$ is strictly convex for $\lambda > 0$.
\end{proof}

\noindent This proposition therefore holds in each of our experiments involving ensembles, but not for \ac{LoRA} averaging: although $Q \mapsto \int \mathbf{B}\mathbf{A} \, \mathrm{d}Q(\mathbf{A},\mathbf{B})$ is affine, the map $\mathbf{u} \mapsto L(\mathbf{y}_j , f(\mathbf{x}_j , \mathbf{W} + \mathbf{u}))$ is not convex in general, because $f$ depends non-linearly on its second argument.

\subsection{Particle-Based Methods}
\label{sec: algorithms}

Let $Q^\star$ denote a solution of \eqref{eq: dist opt}, which we assume to exist.
Since $Q^\star$ is implicitly defined as a minimiser of $\mathcal{J}$, numerical methods are needed.
To interpret the numerical solution as a combination of models, we require a numerical approximation to $Q^\star$ of the form $\sum_{i=1}^m w_i \delta_{\bm{\theta}_i}$; i.e. a \emph{particle-based} method.
(Note that we cannot plug a discrete distribution directly into \eqref{eq: dist opt}, as the entropy term will be infinite in general.)

The most well-studied particle-based method is \emph{mean-field Langevin dynamics} (\acs{MFLD}; cf. \Cref{sec: MFLD}), which can be interpreted as a coupled version of stochastic gradient descent. 
In this work we cast our horizons beyond \ac{MFLD} and also explore more recent advances in distributional optimisation, including \emph{variational gradient descent} (\acs{VGD}; cf. \Cref{sec: MFLD}), and proposing a novel algorithm in this setting called \emph{functional} \ac{VGD} (\Cref{subsec: fVGD}).

\begin{remark}[Existence and uniqueness of $Q^\star$]
\label{rem: well-posed}
    A standard way to ensure existence of a solution $Q^\star$ is to include a confining potential into the loss function $\mathcal{L}$; this is equivalent to regularisation using Kullback--Leibler divergence, cf. \Cref{app: KL ent}.
    If a solution $Q^\star$ exists, it will be unique whenever $\mathcal{J}$ is strictly convex.
\end{remark}

\subsubsection{Mean Field Langevin Dynamics}
\label{sec: MFLD}

\Acl{MFLD} refers to the \emph{McKean--Vlasov} process
\begin{align}
    \mathrm{d} \bm{\theta}_t = - \nabla_V  \mathcal{L}(Q_t)(\bm{\theta}_t) \mathrm{d} t + \sqrt{2 \lambda} \mathrm{d} \mathbf{B}_t ,  \label{eq: McK-V}
\end{align}
where $\nabla_V$ denotes the variational gradient (cf. \Cref{subsec: setup} and \Cref{app: var grad}), $Q_t = \mathrm{Law}(\bm{\theta}_t)$, and $\mathbf{B}_t$ is a Brownian motion on $\mathbb{R}^p$ \citep{hu2021mean}.
This process is constructed such that, under appropriate regularity conditions, the distribution $Q_t$ converges to $Q^\star$ in the large $t$ limit regardless of how $\bm{\theta}_0$ is initialised \citep[see e.g.][]{chizat2022mean}.
This dual representation of $Q^\star$ as the limiting distribution of a stochastic process opens the door to designing algorithms to approximate $Q^\star$; however, the dependence on $Q_t$ is problematic as this is not analytically available.
The McKean--Vlasov process can be approximated by plugging in an $m$ particle discretisation $Q_t \approx Q_m^t = \frac{1}{m} \sum_{j=1}^m \delta_{\bm{\theta}_j^t}$ and employing a suitable numerical method.
In particular, the Euler--Maruyama method with step size $\epsilon > 0$ leads to the following system, which can be interpreted as a coupled version of stochastic gradient descent:
\begin{align}
\bm{\theta}_i^{t+1} & = \bm{\theta}_i^t - \epsilon  \nabla_V \mathcal{L}(Q_m^t)(\bm{\theta}_i^t) + \sqrt{2 \lambda \epsilon} \mathbf{Z}_i^t   \label{eq: MFLD eps}
\end{align}
where the $\mathbf{Z}_i^t$ are independent standard Gaussian variables on $\mathbb{R}^p$.
Intuitively, we can think of \eqref{eq: MFLD eps} as approximating `stochastic gradient descent in the space of probability distributions'.
The distribution $Q_m^t$ converges (in a precise sense) to $Q^\star$ in the limit as $t \rightarrow \infty$ and $m \rightarrow \infty$; crucially, the available results require $\mathcal{L}$ to be convex on $\mathcal{P}(\mathbb{R}^p)$ \citep{chizat2022mean}, which by \Cref{prop: convexity} holds for ensembles, but does not hold for \ac{LoRA} averaging in general.
Implementational details are discussed in \Cref{app: implementation MFLD}.

\subsubsection{Variational Gradient Descent}
\label{sec: VGD}

The recent work of \citet{wang2019nonlinear} proposed a generalisation of Stein variational gradient descent \citep[SVGD;][]{liu2016stein} suitable for problems of the form \eqref{eq: dist opt}, which has the potential to be more efficient than \ac{MFLD}.
The idea is to perform \emph{deterministic} gradient descent on $\mathcal{J}$, noting that the directional derivative in the direction parametrised by the vector field $\mathbf{v} : \mathbb{R}^p \rightarrow \mathbb{R}^p$ is
\begin{align*}
    & \textstyle \left. \frac{\mathrm{d}}{\mathrm{d}\epsilon} \mathcal{J}((\mathbf{I} + \epsilon \mathbf{v})_\# Q) \right|_{\epsilon = 0} \\
    & = \int \bigl[ \nabla_V \mathcal{L}(Q)(\bm{\theta}) \cdot \mathbf{v}(\bm{\theta}) - \lambda (\nabla \cdot \mathbf{v})(\bm{\theta}) \bigr] \; \mathrm{d}Q(\bm{\theta}) .
\end{align*}
Let $k : \mathbb{R}^p \times \mathbb{R}^p \rightarrow \mathbb{R}$ be a symmetric positive definite kernel, such as the Gaussian kernel $k(\bm{\theta},\bm{\vartheta}) = \exp( - \ell^{-2} \|\bm{\theta} - \bm{\vartheta}\|^2 )$ with length-scale $\ell > 0$.
Let $\nabla_2 k$ denote the gradient with respect to the second argument of the kernel and let $\mathcal{H}_k$ denote the \ac{RKHS} associated to the kernel.
The \ac{VGD} algorithm selects, at each time $t$, the vector field $\bm{\phi} \in \mathcal{H}_k^p$ for which the descent is steepest, subject to $\|\bm{\phi}\|_{\mathcal{H}_k^d} = 1$. 
This leads to a deterministic interacting particle system, simulated in discrete time as $\bm{\theta}_i^{t+1} = \bm{\theta}_i^t + \epsilon \bm{\phi}(\bm{\theta}_i^t; Q_m^t)$, where
\begin{align}
\hspace{-5pt} \bm{\phi}(\bm{\theta}_i^t; Q_m^t) 
= \frac{1}{m} \sum_{j=1}^m \left[ \begin{array}{l} - k(\bm{\theta}_i^t , \bm{\theta}_j^t) \nabla_V \mathcal{L}(Q_m^t)(\bm{\theta}_j^t) \\ \quad + \lambda (\nabla_2 k)(\bm{\theta}_i^t,\bm{\theta}_j^t) \end{array} \right] \label{eq: vgd_ws}
\end{align}
Under conditions established in \citet{chazal2025computable}, the distribution $Q_m^t$ converges\footnote{The cited theory is formulated using Kullback--Leibler regularisation; this is equivalent to entropic regularisation when the loss includes the corresponding confining potential, as explained in \Cref{app: KL ent}.} in an averaged sense to $Q^\star$ in the limit $t \rightarrow \infty$ and $m \rightarrow \infty$.
One can consider \ac{VGD} as a \emph{de-randomisation} of \ac{MFLD}; taking the length-scale $\ell$ to zero, and solving the differential equation using the Euler method, we recover \eqref{eq: MFLD eps} with the Gaussian perturbation removed.
Implementational details are discussed in \Cref{app: implementation VGD}.

\subsubsection{Functional VGD}
\label{subsec: fVGD}

Although \ac{VGD} ameliorates the randomness of \ac{MFLD}, it has been argued that the use of a kernel on the parameter space introduces difficulties when the parameter $\bm{\theta}$ is high-dimensional \citep[see e.g.][in the special case of SVGD]{ba2021understanding}.
Motivated by \emph{repulsive} deep ensembles, which take a function-space perspective on the gradient flow of \eqref{eq: dist opt}, we also consider a \emph{functional} version of \ac{VGD} which operates on the output, rather than the parameters, of the machine learning model.
Previous work focussed on approximating a Bayesian posterior $\pi$ using an ensemble, for which the loss function in our notation is $\mathcal{L}(Q) = - \int q(\bm{\theta}) \log \pi(\bm{\theta}) \, \mathrm{d}\bm{\theta}$ and $\lambda = 1$ \citep{wang2018function, d2021repulsive}. 
Our contribution in this respect is first to develop the idea for general $\mathcal{L}$, decoupling it from the Bayesian framework, and then to empirically assess the performance of the functional approach (in \Cref{sec: experiments}).

Our starting point is to map each parameter
$\bm{\theta}\in\mathbb{R}^p$ to its corresponding
$\mathbb{R}^e$-valued model function. Let
$\mathfrak{F} = \{f(\cdot;\bm{\theta}): \bm{\theta}\in\mathbb{R}^p\}$ denote the resulting function space. The map $\Phi:\mathbb{R}^p\rightarrow\mathfrak{F}$ defined by $\Phi(\boldsymbol{\theta})=f(\cdot;\boldsymbol{\theta})$ allows us to associate each $Q\in\mathcal{P}(\mathbb{R}^p)$ with the pushforward distribution $\Phi_{\#}Q\in\mathcal{P}(\mathfrak{F})$.
We may similarly regard, with a slight abuse of notation, $\mathcal{L}:\mathcal{P}(\mathfrak{F})\rightarrow\mathbb{R}$, thereby decoupling the optimisation objective from the dimension $p$ of the parameter $\boldsymbol{\theta}$. 
Let $k:\mathfrak{F}\times\mathfrak{F}\rightarrow\mathbb{R}$ be a symmetric positive definite kernel. Analogous to~\eqref{eq: vgd_ws}, we can then write the kernelised steepest descent in function space as
\begin{align*}
    \bm{\phi}({f}_i^t; Q_m^t) 
    = \frac{1}{m} \sum_{j=1}^m \left[ \begin{array}{l} - k({f}_i^t , {f}_j^t) \nabla_V \mathcal{L}(Q_m^t)({f}_j^t) \\ \quad + \lambda (\nabla_2 k)({f}_i^t,{f}_j^t) \end{array} \right]
\end{align*}
where $f_j^t=f(\cdot;\boldsymbol{\theta}_j^t)\in\mathfrak{F}$ and $Q_m^t = \frac{1}{m} \sum_{j=1}^m \delta_{{f}_j^t}$. To interpret $\nabla_V\mathcal{L}$ and $\nabla_2k$, 
define the stacked output map $F_n(\boldsymbol{\theta})  =(f(\mathbf{x}_1;\boldsymbol{\theta})^\top,\ldots, f(\mathbf{x}_n;\boldsymbol{\theta})^\top)^\top \in\mathbb{R}^{ne}$.
Thus, on $\{\mathbf{x}_i\}_{i=1}^n$, each $f(\cdot;\boldsymbol{\theta})\in\mathfrak{F}$ is represented by $F_n(\boldsymbol{\theta})$, and the required gradients may be interpreted as Euclidean gradients on $\mathbb{R}^{ne}$. Writing $D_{\boldsymbol{\theta}}F_n(\boldsymbol{\theta}) \in\mathbb{R}^{ne\times p}$ for the corresponding Jacobian, its transpose is used to pull the function-space update back to the parameter space, i.e.,
\begin{align}
    \boldsymbol{\theta}_i^{t+1} = \boldsymbol{\theta}_i^t + \epsilon D_{\boldsymbol{\theta}}F_n(\boldsymbol{\theta}_i^t)^\top \bm{\phi}(f_i^t;Q_m^t).
    \label{eq: proj_fvgd}
\end{align}
The resulting algorithm will be called \emph{functional} VGD (\acs{FVGD}).
Note that \ac{FVGD} (\Cref{subsec: fVGD}) is applicable to ensembles but is \emph{not} applicable to \ac{LoRA} averaging, since in the latter case the averaging occurs \textit{before} the nonlinear transformation is applied.
Implementational details are discussed in \Cref{app: implementation FVGD}.

\smallskip

At this point we have formulated joint training as distributional optimisation and introduced several numerical methods for this task; to evaluate their performance we now undertake an empirical assessment.

\section{Experimental Results}
\label{sec: experiments}

Our experimental assessment begins with considering ensemble methods for simple classification tasks (\Cref{sec: ensembles}), before moving to \ac{LoRA} averaging for the more challenging task of fine-tuning foundational models (\Cref{sec: LoRA averaging}).

\subsection{Ensemble Methods for Classification}
\label{sec: ensembles}

For these first experiments we consider classification tasks and employ the cross-entropy loss $L(\bm{y},\bm{p}) = - \sum_i y_i \log(p_i)$ where $\bm{y}$ is a one-hot vector indicating the true label and $\bm{p}$ is a vector of predicted class probabilities; we take $f(\mathbf{x},\bm{\theta}) = \mathrm{logit}(\bm{p})$ to be the output from the machine learning model.
To mimic more challenging settings, we deliberately limit the capacity of the model architectures, so no individual model can perfectly solve the given task.
Two sets of results are presented; a two-dimensional \textbf{Spiral} classification task (\Cref{fig:spiral_all,fig:spiral_all_boundary}) and the \textbf{MNIST} dataset (\Cref{fig:mnist_all}).
All experimental protocol, including the architecture and training details, are contained in \Cref{app: ensemble protocol}.

\begin{figure*}[t!]
    \centering

    \begin{subfigure}[t]{0.32\linewidth}
        \centering
        \includegraphics[width=\linewidth]{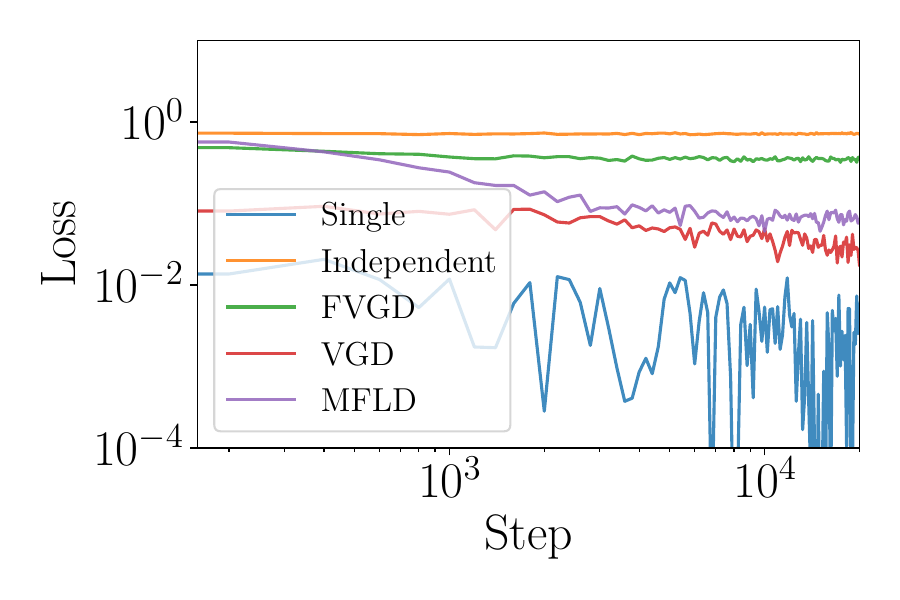}
        \caption{Training Loss}
        \label{fig:spiral_loss}
    \end{subfigure}
    \hfill
    \begin{subfigure}[t]{0.32\linewidth}
        \centering
        \includegraphics[width=\linewidth]{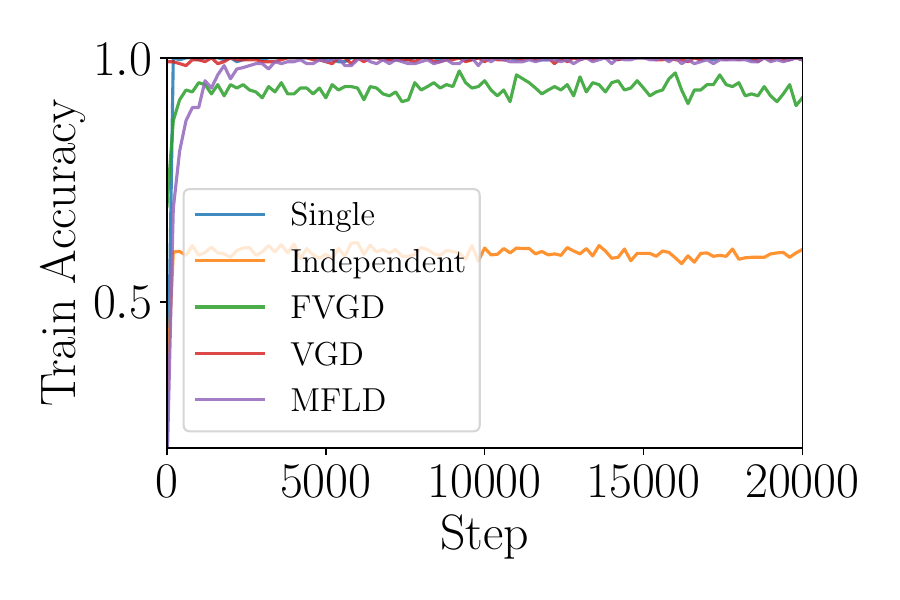}
        \caption{Training Accuracy}
        \label{fig:spiral_train_acc}
    \end{subfigure}
    \hfill
    \begin{subfigure}[t]{0.32\linewidth}
        \centering
        \includegraphics[width=\linewidth]{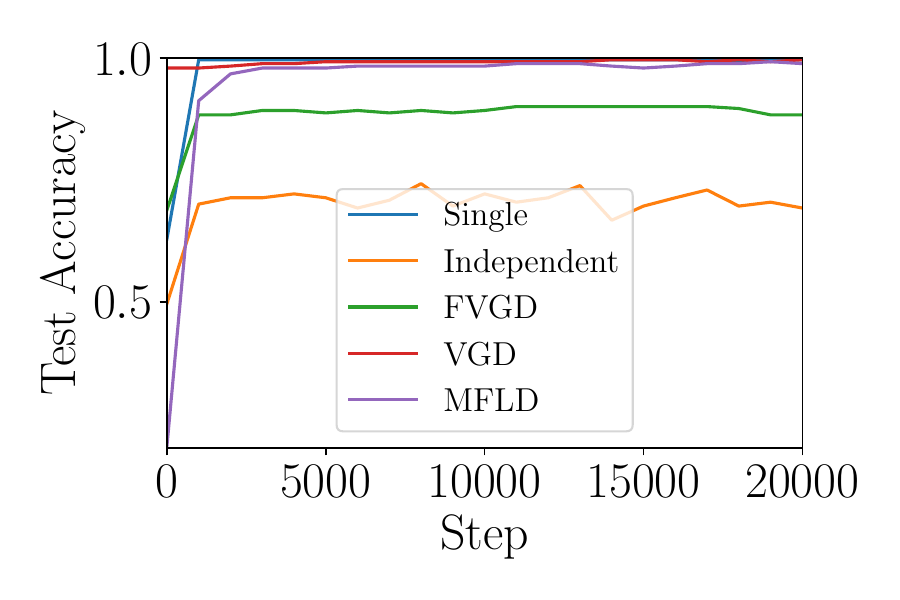}
        \caption{Testing Accuracy}
        \label{fig:spiral_test_acc}
    \end{subfigure}

    \caption{\textbf{Spiral} dataset: optimisation curves under a capacity-controlled setup. 
    Independent, MFLD, VGD, and FVGD all use an ensemble of $m=10$ MLPs with one hidden layer of width 2, while Single uses one hidden layer of width 20.
}
    \label{fig:spiral_all}
\end{figure*}

\begin{figure*}[t!]
    \centering

    \begin{subfigure}[t]{0.32\linewidth}
        \centering
        \includegraphics[width=\linewidth]{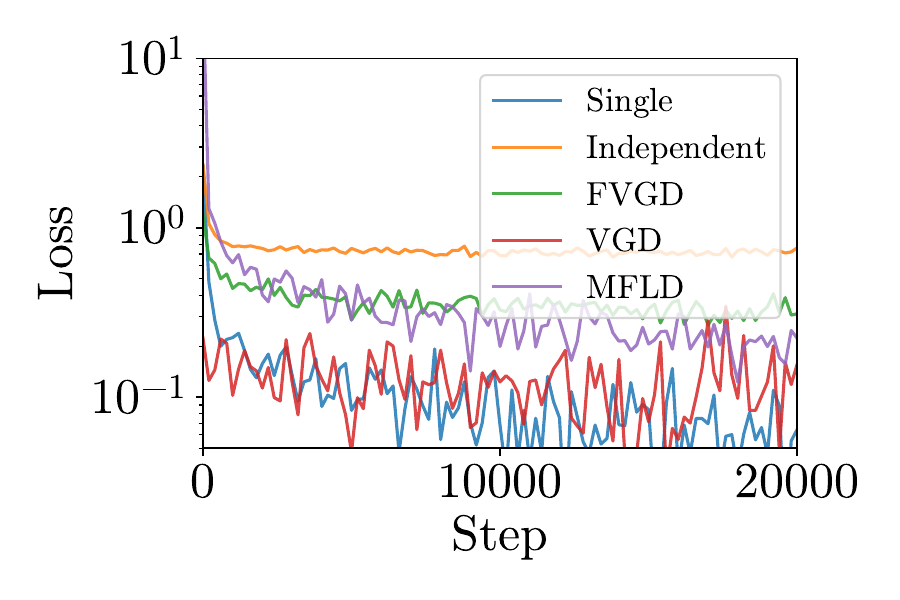}
        \caption{Training Loss}
        \label{fig:mnist_loss}
    \end{subfigure}
    \hfill
    \begin{subfigure}[t]{0.32\linewidth}
        \centering
        \includegraphics[width=\linewidth]{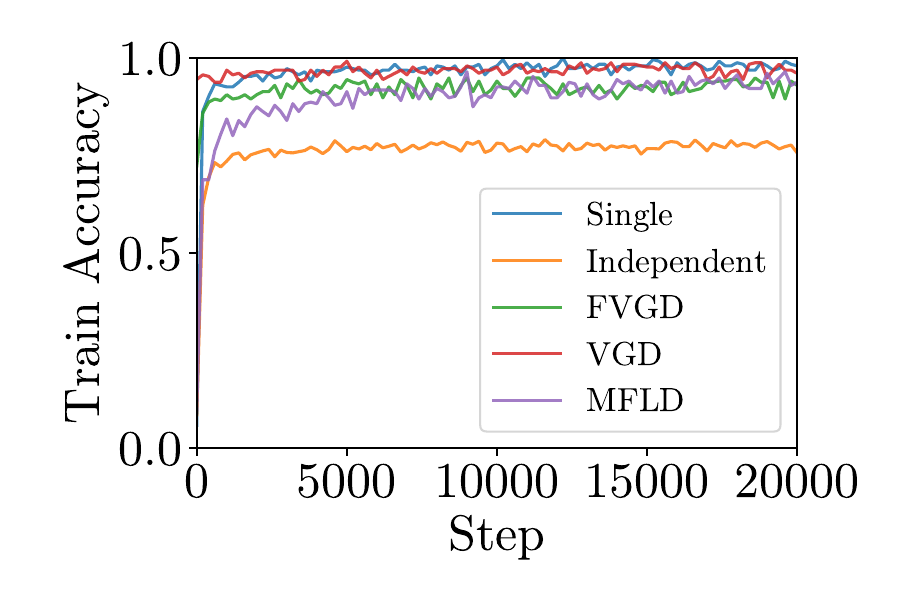}
        \caption{Training Accuracy}
        \label{fig:mnist_train_acc}
    \end{subfigure}
    \hfill
    \begin{subfigure}[t]{0.32\linewidth}
        \centering
        \includegraphics[width=\linewidth]{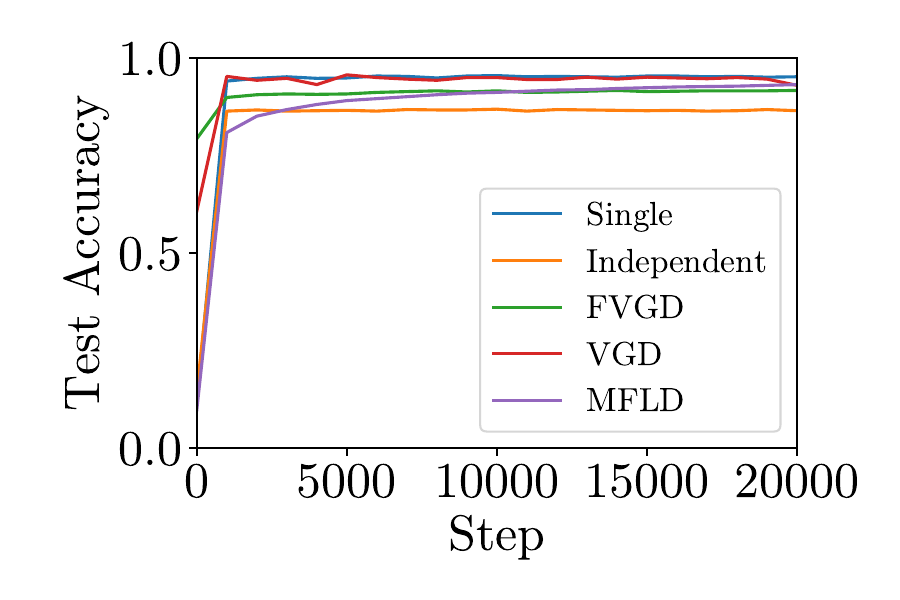}
        \caption{Testing Accuracy}
        \label{fig:mnist_test_acc}
    \end{subfigure}

    \caption{\textbf{MNIST} dataset: optimisation curves under a controlled capacity setup. 
    Independent, MFLD, VGD, and FVGD all use an ensemble of $m=10$ MLPs hidden layer of width 4, while Single is a larger MLP with hidden sizes [16,16].
}
    \label{fig:mnist_all}
\end{figure*}

\paragraph{Finding 1:  Joint training is beneficial}
First we confirmed that an ensemble of $m = 10$ independently trained models was inferior, on both \textbf{Spiral} and \textbf{MNIST}, to an ensemble in which the models are jointly trained (cf. \ac{MFLD}, \ac{VGD} or \ac{FVGD}): the independently trained ensemble attains the lowest test accuracy on both tasks (\Cref{fig:spiral_all,fig:mnist_all}), and the gap is substantial.
More notably, the jointly trained ensembles are competitive with a \emph{single} larger network trained end-to-end, despite decomposing into $m$ small models that admit distributed test-time evaluation: on both \textbf{Spiral} and \textbf{MNIST} the best joint ensembles essentially match the single model.
This is encouraging given that each constituent model is individually sub-optimal by design, and it motivates further investigation of joint training strategies for more challenging tasks.

\paragraph{Finding 2:  Performance is robust to the strength of entropic regularisation}

Entropic regularisation ($\lambda > 0$) is introduced primarily to render the distributional optimisation problem well-posed; 
the limiting case of \ac{MFLD} with $\lambda = 0$ corresponds to gradient descent on the joint training objective
\begin{align*}
        (\bm{\theta}_1 , \dots , \bm{\theta}_m) \mapsto \mathcal{L}(Q_m) = \sum_{j=1}^n L\left( \mathbf{y}_j , \frac{1}{m} \sum_{i=1}^m f(\mathbf{x}_j , \bm{\theta}_i)  \right)  
\end{align*}
and serves as a standard baseline.
In practice we implement stochastic mini-batching within \ac{MFLD} (cf. \Cref{app: implementation MFLD}), so that the $\lambda = 0$ baseline amounts to stochastic gradient descent.
A natural question is how sensitive downstream performance is to the choice of $\lambda$.
Sweeping $\lambda$ over several orders of magnitude on \textbf{MNIST} (\Cref{tab:mnist_lambda_ablation} in \Cref{app: vary lambda}), we find test accuracy to be remarkably stable for all three methods: \ac{VGD} remains at $0.948$ across the entire range $0 \le \lambda \le 20$, \ac{MFLD} stays within $[0.926, 0.932]$, and \ac{FVGD} within $[0.918, 0.923]$ for $\lambda \leq 10^{-3}$, with pronounced degradation appearing only for very large $\lambda$ (where \ac{FVGD} eventually collapses).
We therefore observe no accuracy penalty from the entropic term at moderate strengths, consistent with the view that parametrising the objective in terms of $Q$ (instead of $\bm{\theta}$), in which the entropic regulariser is \emph{convex}, yields a well-behaved optimisation landscape (\Cref{prop: convexity}).
On \textbf{MNIST}, \ac{VGD} also attains the highest accuracy of the three methods across this sweep.

\paragraph{Finding 3:  A functional perspective offers no consistent advantage}
Despite compelling theoretical arguments for taking a function-space perspective in related recent work (cf. \Cref{subsec: fVGD}), the performance of \ac{FVGD} is \emph{task-dependent} in our experiments.
On \textbf{Spiral}, \ac{FVGD} is inferior to \ac{MFLD} and \ac{VGD} (\Cref{fig:spiral_all}); on \textbf{MNIST}, however, it matches performance with \ac{MFLD} and is inferior to \ac{VGD} (test accuracy $0.918$-$0.923$, versus $0.932$ for \ac{MFLD} and $0.948$ for \ac{VGD} ; \Cref{tab:mnist_lambda_ablation}).
We therefore find no consistent benefit to operating in the functional output space of the model rather than in its original parameter space (i.e. $Q \in \mathcal{P}(\mathbb{R}^p)$) in this setting.
In \Cref{app: fvgd diagnosis} we diagnose that the Jacobian pull-back $(\nabla_2 f) \bm{\phi}$ is close to orthogonal to the intended functional direction for the narrow networks considered in these experiments; we regard the identification of a pull-back that avoid this failure modes as a natural step for future work.

\subsection{LoRA Averaging for Fine-Tuning LLMs}
\label{sec: LoRA averaging}

As a more challenging example, we now consider \ac{LoRA} averaging for fine-tuning \acp{LLM}, so that $\bm{\theta}_i = (\mathbf{A}_i,\mathbf{B}_i)$ where\footnote{For our experiments we fine-tune many layers, not just one, but we avoid making this explicit in the presentation.} each $\mathbf{A}_i$ and $\mathbf{B}_i$ has rank $r$.
Throughout this section, $m$ denotes the number of particles in a single distributional optimisation, $r$ the (maximum) rank of each particle, $R = mr$ the resulting (maximum) rank of the \ac{LoRA} averaged adapter, and $M$ the number of \emph{independently} trained adapters that are subsequently merged (\Cref{sec: LoRA averaging}, Finding 2).
For training and evaluation we used the commonsense reasoning suite assembled by \citet{hu2023llmadapters}, covering binary question answering \citep{clark2019boolq}, physical \citep{bisk2020piqa} and social \citep{sap2019socialiqa} commonsense, contextual completion \citep{zellers2019hellaswag}, pronoun disambiguation \citep{sakaguchi2021winogrande}, and science-style multiple-choice QA \citep{clark2018arc,mihaylov2018openbookqa}, while enabling a unified discriminative evaluation protocol.
Our experimental protocol follows \citet{nitanda2025propagation}, taking the training loss $L$ to be the standard per-token cross-entropy over the answer set, and we adopt Llama-3-8B \citep{meta2024llama3-8b,DBLP:journals/corr/abs-2407-21783} as our primary base model.
The architecture and training details are contained in \Cref{app: LLM protocol}.
An ablation study using alternative base models can be found in \Cref{app: choice of LLM}.

\paragraph{Finding 1: \ac{VGD} outperforms \ac{MFLD} at higher rank}

First we implemented \ac{LoRA} averaging with $m = 32$ rank-$1$ particles, so that the \ac{LoRA} averaged adapter $\Delta \mathbf{W} = \frac{1}{m} \sum_{i=1}^m \mathbf{B}_i \mathbf{A}_i$ has rank at most $R = mr = 32$, performing optimisation using either \ac{MFLD} or \ac{VGD} and comparing against Standard fine-tuning of a single rank-$32$ adapter with noisy AdamW.
The use of \ac{MFLD} in this context coincides with the work of \citet{nitanda2025propagation}, while the use of \ac{VGD} is novel, with a per-step cost comparable to \ac{MFLD} (a ${\sim}6\%$ wall-clock overhead; \Cref{app: timing}).
Regularisation of $\lambda = 10^{-5}$ was used for \ac{MFLD}, following \citet{nitanda2025propagation}, while for \ac{VGD} we used $\lambda = 10^{-7}$ at rank $32$ and $\lambda = 10^{-6}$ at rank $256$, following our ablation study in \Cref{app: vary lambda}.

At rank $R=32$ the three approaches are close in performance; the mean $8$-task accuracies are $84.97 \pm 0.09$ (Standard), $85.16 \pm 0.08$ (\ac{MFLD}) and $85.48 \pm 0.06$ (\ac{VGD}), cf block 1 in \Cref{tab:llm-main-best}.
However, the \emph{shape} of the performance distribution differs across methods (\Cref{fig:boxplot_seeds} in \Cref{app: visualise boxplot}); the medians are $85.00\%$ (Standard), $85.15\%$ (\ac{MFLD}) and $85.49\%$ (\ac{VGD}).
The best single-seed average performance was achieved by \ac{VGD} ($85.66\%$; \Cref{tab:per_seed_full}).
A paired across-task bootstrap (\Cref{tab:llm-paired-bootstrap} in \Cref{app: visualise boxplot}) places \ac{VGD} above Standard ($+0.52$, confidence interval $[+0.31,+0.74]$) and \ac{MFLD} ($+0.32$, $[+0.16,+0.48]$), with both comparisons statistically significant.

At higher rank there is a clearer advantage for \ac{VGD}.
For a single rank-$256$ adapter (block 2 in \Cref{tab:llm-main-best}), \ac{VGD} reaches $83.54\%$ against $81.83\%$ for \ac{MFLD} and $82.75\%$ for standard fine-tuning, and the paired across-task comparison (\Cref{tab:llm-paired-bootstrap} in \Cref{app: visualise boxplot}) places \ac{VGD} above both \ac{MFLD} ($+1.71$, $[+1.38,+2.04]$) and Standard ($+0.79$, $[+0.38,+1.24]$), whereas \ac{MFLD} falls below Standard at this rank ($-0.92$, $[-1.20,-0.63]$).
The $\lambda$-sweep in \Cref{tab:full_lambda_sweep_appendix} of \Cref{app: vary lambda} shows that some of this gain is already present at $\lambda = 0$ ($82.92$, versus $82.75$ for Standard), while the best result is $83.54$ at the selected $\lambda = 10^{-6}$.
Setting $\lambda = 0$ in \eqref{eq: vgd_ws} does \emph{not} recover Standard fine-tuning; it leaves a kernel-smoothed gradient, which coincides with the ordinary gradient only in the limit $\ell \rightarrow 0$.
Since the $\lambda = 0$ result ($82.92$) already exceeds Standard fine-tuning ($82.75$), this is consistent with a contribution from the kernel-induced coupling between particles; the positive-$\lambda$ optimum ($83.54$) suggests that entropic regularisation is beneficial.

\begin{SCtable*}[\sidecaptionrelwidth][h]
\caption{LoRA fine-tuning on Llama-3-8B (accuracy, \%).
Best per column per block in bold.
The final column $\Delta_{\text{Std}}$ is the mean per-task accuracy gain over the Standard baseline within each block; bootstrap confidence intervals for these deltas are given in \Cref{tab:llm-paired-bootstrap}.
}
\label{tab:llm-main-best}
\centering
\scriptsize
\setlength{\tabcolsep}{3pt}
\begin{tabular}{lcccccccccc}
\toprule
Method & SIQA & PIQA & Wino & OBQA & ARC-c & ARC-e & BoolQ & Hella & Avg & $\Delta_{\text{Std}}$ \\
\midrule

\multicolumn{11}{l}{\emph{Single rank-32 adapter (mean over 8 seeds)}} \\
Standard
& 80.23 & 88.38 & 86.28 & 85.67 & 78.83 & 90.26 & 74.82 & 95.26 & 84.97 & --- \\
MFLD
& 80.29 & 88.28 & 86.39 & 86.09 & \textbf{79.83} & 90.26 & 74.95 & 95.22 & 85.16 & $+0.20$ \\
VGD
& \textbf{80.86} & \textbf{88.86} & \textbf{86.48} & \textbf{86.64}
& \textbf{79.83} & \textbf{90.75} & \textbf{75.00} & \textbf{95.44}
& \textbf{85.48} & $\mathbf{+0.52}$ \\

\midrule
\multicolumn{11}{l}{\emph{Single rank-256 adapter}} \\
Standard
& 80.23 & 86.99 & 84.39 & 81.54 & 74.85 & 87.94 & \textbf{73.40} & 92.69 & 82.75 & --- \\
MFLD
& 78.79 & 86.06 & 84.24 & 80.98 & 73.38 & 86.94 & 72.36 & 91.89 & 81.83 & $-0.92$ \\
VGD
& \textbf{80.60} & \textbf{87.69} & \textbf{85.87} & \textbf{83.50}
& \textbf{75.65} & \textbf{88.51} & 73.18 & \textbf{93.34}
& \textbf{83.54} & $\mathbf{+0.79}$ \\

\midrule
\multicolumn{11}{l}{\emph{PoC merge of 8 rank-32 adapters $\to$ rank-256}} \\
Standard
& 82.12 & 89.74 & 88.48 & 88.24 & 82.91 & 92.16 & 76.05 & 96.27 & 87.00 & --- \\
MFLD
& 82.18 & \textbf{90.14} & 88.73 & 88.08 & \textbf{83.35}
& 92.17 & 76.25 & 96.22 & 87.14 & $+0.14$ \\
VGD
& \textbf{82.55} & 90.10 & \textbf{88.97} & \textbf{88.49}
& 83.18 & \textbf{92.36} & \textbf{76.39} & \textbf{96.41}
& \textbf{87.31} & $\mathbf{+0.31}$ \\

\midrule
\multicolumn{11}{l}{\emph{Joint OA (8 branches jointly trained, $8{\times}$ inference)}} \\
MFLD
& 82.29 & \textbf{90.53} & 90.77 & 89.11 & 84.47
& \textbf{93.27} & \textbf{85.48} & 97.18 & 89.14 & --- \\
VGD
& \textbf{82.87} & 90.51 & \textbf{91.07} & \textbf{89.47}
& \textbf{84.69} & 93.21 & 85.25 & \textbf{97.59}
& \textbf{89.33} & --- \\

\bottomrule
\end{tabular}
\end{SCtable*}

\paragraph{Finding 2: PoC merging is beneficial}

Recent work revealed that merging $M$ adapters $\Delta \mathbf{W}_i$, each of rank at most $R$, into a single adapter $\frac{1}{M} \sum_{i=1}^M \Delta \mathbf{W}_i$ of rank at most $MR$, can improve performance beyond training a standard rank-$MR$ adapter; this was called a \emph{propagation of chaos} (\acs{PoC}) \emph{merge} in \citet{nitanda2025propagation}, where each adapter $\Delta \mathbf{W}_i$ was independently trained using \ac{MFLD}.
We test whether this benefit carries over to \ac{VGD}.
Here we train $M=8$ independent adapters $\Delta \mathbf{W}_i$, each of rank at most $R = 32$, as previously described and \ac{PoC} merge them into a single adapter of rank at most $RM = 256$.
Note that the merged adapter has the same test-time cost as a standard rank-256 \ac{LoRA}.
Results are reported in block 3 of \Cref{tab:llm-main-best}.

All three training regimes benefit substantially from \ac{PoC} merging, with average accuracy gains of $+2.03$, $+1.98$, and $+1.82$ percentage points for Standard, \ac{MFLD}, and \ac{VGD} respectively over their corresponding single-adapter baselines.
This is consistent with the finding of \citet{nitanda2025propagation} that training $M$ smaller adapters and merging them provides an effective alternative to training a single adapter of comparable rank.
After merging, the three methods are close ($87.00\%$ for Standard, $87.14\%$ for \ac{MFLD} and $87.31\%$ for \ac{VGD}); the paired across-task comparison places \ac{VGD} above both Standard ($+0.31$, $[+0.24,+0.39]$) and \ac{MFLD} ($+0.17$, $[+0.04,+0.28]$), with \ac{MFLD} above Standard ($+0.14$, $[+0.01,+0.28]$).
In summary, \ac{PoC} merging appears broadly beneficial.
The effect of kernel choice for \ac{VGD} is explored in \Cref{app: kernel ablation}.

\paragraph{Finding 3:  Joint training yields further improvement}

The main drawback of \ac{PoC} merging is that the adapters $\Delta \mathbf{W}_i$ are independently trained, which could be sub-optimal.
In principle our methods enable joint training, so here we investigate the performance of a \ac{LoRA} \emph{ensemble} when jointly trained\footnote{The efficacy of ensembling \emph{independently} \ac{LoRA}-fine-tuned models was demonstrated in \citet{wang2023ensemble} but to our knowledge joint training has not previously been considered in this context.}.
That is, we now consider the setting where $\bm{\theta}_i = \mathbf{B}_i\mathbf{A}_i$ has rank at most $r = 32$ and we jointly train an ensemble of $m = 8$ such adapters using either \ac{MFLD} or \ac{VGD}, similarly to \Cref{sec: ensembles}.
Results are reported in block 4 of \Cref{tab:llm-main-best}.
At an $8\times$ training budget, the jointly trained ensemble (logit averaging across all $m = 8$ branches) achieves $89.14\%$ (\ac{MFLD}) and $89.33\%$ (\ac{VGD}) average accuracy on the commonsense reasoning benchmark, exceeding the corresponding $1\times$-training \ac{PoC} merges by $+2.00$ and $+2.03$ percentage points respectively.
This improvement is obtained at the higher $8\times$ training cost. 

\section{Discussion}
\label{sec: discussion}

Despite it being common practice, the potential benefit of combining predictions from different models remains poorly understood, with contrasting arguments for the effectiveness of this strategy being put forward \citep[e.g.][]{abe2022deep,wortsman2022model}.
To gain a deeper understanding we need to consider how the collection of models are trained \citep{mattei2025ensembles}, and in this work we have presented distributional optimisation as a framework in which joint training can be conceptualised.
This perspective brings new insight:
First, it reveals that ensembles and \ac{LoRA} averaging, which are usually discussed interchangeably, are not equivalent once lifted to $\mathcal{P}(\mathbb{R}^p)$; the former yields a convex objective for which the convergence theory of \ac{MFLD} applies, the latter does not.
Second, this perspective suggests routes to developing new algorithms, exemplified by \ac{VGD} and \ac{FVGD}.
The potential of these methods was empirically explored, finding that \ac{VGD} often matched or out-performed \ac{MFLD}.

Our results also weigh in on the argument above, demonstrating that jointly trained ensembles can be competitive with a single larger model while admitting distributed test-time evaluation (Finding 1 in \Cref{sec: ensembles}), and that the resulting distributional objective is well-behaved across a wide range of entropic-regularisation strengths (Finding 2 in \Cref{sec: ensembles}).

\paragraph{Limitations and Opportunities}

The main limitation of our work is that the empirical assessment was proof-of-concept level; further empirical investigation on more challenging learning tasks will be required to comprehensively assess the benefit of the distributional optimisation framework.
We did not emphasise geometric considerations, but it can be shown that the \ac{VGD} method exploits \emph{Stein} geometry \citep{wang2019nonlinear}, while recent work suggests that optimisation using the \emph{Wasserstein} geometry may offer superior convergence properties in high-dimensional settings \citep{he2024regularized,duncan2023geometry,d2021repulsive}.
Investigating alternatives to \ac{VGD} which exploit the Wasserstein geometry, as well as extending \ac{FVGD} to \ac{LLM} fine-tuning, are promising directions for future work.

%% file: acks.tex
YL was supported by a five-year PGTA scholarship from the School of Computing, Newcastle University.
CW was supported by the China Scholarship Council under Grant Number 202208890004. 
CJO, ZS were supported by EPSRC EP/W019590/1. 
CJO was supported by a Philip Leverhulme Prize PLP-2023-004.

%% file: appendix.tex
\section*{Supplementary Material}

This document contains supplementary material for the manuscript \textit{A Distributional Optimisation Perspective on Combining Models in Deep Learning}.
\Cref{app: other approaches} contains a discussion of other approaches to combining models in deep learning.
\Cref{app: KL ent} explains how entropic regularisation and Kullback--Leibler regularisation are equivalent up to a re-defining of the loss function $\mathcal{L}$.
\Cref{app: implementation} contains full details on how each algorithm was implemented.
The protocol for each of our experiments is reported in \Cref{app: protocol}, with additional empirical results contained in \Cref{app: additional}.

\section{Combining Models in Deep Learning}
\label{app: other approaches}

This appendix provides a broader discussion of alternative methods for combining multiple deep learning models, complementing the more focused discussion in the main text.

\paragraph{Mixture of Experts}

A \emph{mixture of experts} (\acs{MoE}) takes the form
\begin{align}
    f(\mathbf{x}) = \sum_{i=1}^m w_i(\mathbf{x}) f_i(\mathbf{x})   \label{eq: MoE}
\end{align}
where, in contrast to ensembles \eqref{eq: output ave}, the weights $w_i$ are now input-dependent.
As with ensembles, weights $w_i(\mathbf{x})$ can be learned based on a pre-trained candidate model set, but it is also common for both the models $f_i$ and the weight functions $w_i$ to be jointly trained \citep{eigen2013learning,chen2022towards}.
\acp{MoE} are widely used in \acp{LLM}, where each model $f_i$ is trained on text data concerning a specific subject \citep{fedus2022review}.
On the other hand, one can interpret the \ac{MoE} in \eqref{eq: MoE} as a particular architecture choice for a single model, with the number $m$ of experts being a hyperparameter of the model.
Relatedly, it has been argued that a trained multilayer perceptron approximates a \ac{MoE} \citep{boix2025secret}.
One of the main challenges in \acp{MoE} is the use of \emph{ad hoc} methods to ensure coverage of the input domain; i.e. to ensure there is at least one expert capable of responding appropriately to a given input.

\paragraph{Parameter Averaging}

Assuming the models $f_i$ are instances of the same architecture, differing only in their parameters, i.e. $f_i(\mathbf{x}) = f(\mathbf{x} , \bm{\theta}_i)$, and that the parameter space is a vector space such (e.g. $\mathbb{R}^p$), one can construct a model 
\begin{align}
\mathbf{x} \mapsto f\left( \mathbf{x} , \sum_{i=1}^m w_i \bm{\theta}_i \right)   \label{eq: param aver}
\end{align}
based on a (possibly weighted) average over the parameters of each agent.
Note that, in contrast to ensembles and \acp{MoE}, parameter averaging does not change the expressive capacity of the model.
A special case of this approach is where $(\bm{\theta}_i)_{i=1}^m$ represents a single training trajectory; remarkably, maintaining even a uniform running average of parameters during training can improve generalisation \citep[e.g.][]{szegedy2016rethinking,izmailov2018averaging}.
Another common case is where each $\bm{\theta}_i$ arises from training using a different random seed \citep{nagarajan2019uniform,matena2022merging,neyshabur2020being,von2021neural,frankle2020linear}.
A uniform average will be sub-optimal in general, and the term \emph{model soup} refers to strategies used to learn non-uniform weights, which have been shown to improve performance on tasks such as image and text classification \citep{wortsman2022model}.
Again, a limiting instance of this approach is to use what is believed to be the single best-performing agent.
The main limitation of parameter averaging is that the loss landscape need not be convex in a neighbourhood of the candidate parameter set $\{\bm{\theta}_i\}_{i=1}^m$, in which case taking a convex combination of parameters could have an undesirable effect \citep{lu2026model}.

\paragraph{Combining Models by Voting}

In the setting of classification, \emph{majority vote} aggregates the outputs of several classifiers and selects the class that receives the most votes.
This can be considered a robust alternative to directly averaging the class specific probabilities produced by each model, which in turn we can represent as an ensemble \eqref{eq: output ave}.
Indeed, allowing each model a single `vote' limits the influence that any single model can have on the overall output.
The motivation for this approach is sometimes referred to as the \emph{wisdom of crowds}; however, this effect does not always exist when the models in the `crowd' are correlated and do not perform well \citep{orzechowski2025crowd}.
This issue can be addressed by allowing for non-uniform voting influence, where models which are believed to be better are afforded more influence in the voting process; this naturally engenders a trade-off between efficiency and robustness \citep{dogan2019weighted}.

\paragraph{Model Selection}

A limiting case of combining multiple models is to select the single best-performing model from the candidate set.
A richer candidate model set in principle offers an improved chance to find a model which performs well for the task at hand, but the selecting of an appropriate model can become more difficult when data are limited.
The idea is popular in statistical epidemiology and causal inference, where using cross-validation to select among candidate regression models is often called a \textit{Super Learner} \citep{van2007super}.
The approach is seen as alleviating the burden on the researcher to commit to a single regression model for their analysis, explaining its popular appeal.

\paragraph{Mixed Strategies}

Several works combine elements from the different approaches to combining machine learning models which we have discussed.
It is impractical to present an exhaustive discussion, but we highlight two relevant examples:
As a first example, an \ac{MoE} approach can be combined with \ac{LoRA}, where each expert is a fine-tuned foundational model with fine-tuning achieved using \ac{LoRA} and a \emph{gating network} used to delineate which `expert' is used at test-time \citep{wu2024mixture}.
As a second example, a mixed strategy combining elements of ensembles, \acp{MoE}, and \ac{LoRA} adapters was proposed in \citep{wang2023ensemble}.

\section{From Kullback--Leibler Divergence to Entropy}
\label{app: KL ent}

Several works, including \citet{chazal2025computable}, employ Kullback--Leibler divergence as a regulariser, as an alternative to the (negative) entropy used in this work. 
The purpose of this appendix is to clarify that one can interchange between these two different perspectives, interpreting the use of Kullback--Leibler divergence as the addition of an additional (linear) term in the loss function:

\begin{proposition*}[Confining potentials and relative entropy]
\label{prop: kl-confinement}
Let $V:\mathbb{R}^p\to\mathbb{R}$ be measurable and suppose that
$$
Z_V :=  \int \exp\left\{-\frac{V(\bm{\theta})}{\lambda}\right\} \,\mathrm{d}\bm{\theta} <\infty.
$$
Define the reference probability measure $Q_0$ via the density function
$$
q_0(\bm{\theta}) = \frac{1}{Z_V}  \exp\left\{-\frac{V(\bm{\theta})}{\lambda}\right\}.
$$
Then, for every probability measure $Q$ that is absolutely continuous with respect to Lebesgue measure, 
$$
\mathcal{L}(Q) + \lambda \operatorname{KLD}(Q \| Q_0) = \underbrace{ \mathcal{L}(Q) + \int V(\bm{\theta})\,\mathrm{d}Q(\bm{\theta}) + \lambda\log Z_V }_{ =: \tilde{\mathcal{L}}(Q) } + \lambda \mathcal{E}(Q)  
$$
whenever these quantities are well-defined.
\end{proposition*}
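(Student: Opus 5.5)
The plan is to expand the Kullback--Leibler divergence directly using the explicit form of $q_0$. Let $Q$ be absolutely continuous with density $q$. Since $q_0 > 0$ everywhere (it is an exponential divided by the finite positive constant $Z_V$), $Q \ll Q_0$ holds automatically. Hence
\begin{align*}
\operatorname{KLD}(Q \| Q_0) = \int q(\bm{\theta}) \log \frac{q(\bm{\theta})}{q_0(\bm{\theta})} \, \mathrm{d}\bm{\theta} .
\end{align*}
Substituting $\log q_0(\bm{\theta}) = -V(\bm{\theta})/\lambda - \log Z_V$, the integrand becomes
\begin{align*}
q(\bm{\theta}) \log q(\bm{\theta}) + \frac{1}{\lambda} V(\bm{\theta}) q(\bm{\theta}) + q(\bm{\theta}) \log Z_V .
\end{align*}

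Next, integrate term by term. The first term gives $\mathcal{E}(Q)$. The second gives $\lambda^{-1}\int V \, \mathrm{d}Q$. The third gives $\log Z_V$, because $\int q = 1$. Multiplying through by $\lambda$ and adding $\mathcal{L}(Q)$ to both sides yields the claimed identity. The right-hand side then regroups as $\tilde{\mathcal{L}}(Q) + \lambda \mathcal{E}(Q)$, with $\tilde{\mathcal{L}}$ as defined in the statement.

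The only real obstacle is justifying the term-by-term splitting of the integral, since $\int q \log q$ and $\int V \, \mathrm{d}Q$ could individually be infinite or undefined. This is exactly what the hypothesis ``whenever these quantities are well-defined'' is for. I will interpret it as requiring that $\mathcal{E}(Q)$ and $\int V \, \mathrm{d}Q$ exist in the extended reals and are not of opposite infinite signs. Linearity of the Lebesgue integral then applies to the sum of the integrands. Concretely, I will split each integrand into its positive and negative parts. The splitting is valid provided at least one of the positive or negative parts has finite total integral.

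If instead one side is infinite, the identity still holds in $(-\infty, \infty]$ by the same computation, as long as no $\infty - \infty$ arises. I will note this briefly and not dwell on it. No earlier results from the paper are needed beyond the definition of $\mathcal{E}$.
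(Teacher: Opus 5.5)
Your proposal is correct and is essentially the paper's proof: substitute $\log q_0 = -V/\lambda - \log Z_V$ into the integrand of $\lambda \operatorname{KLD}(Q \| Q_0)$, integrate term by term to obtain $\lambda\mathcal{E}(Q) + \int V \,\mathrm{d}Q + \lambda \log Z_V$, and rearrange. Your additional remarks make explicit what the paper leaves to the phrase ``whenever these quantities are well-defined''. These are that $q_0 > 0$ gives $Q \ll Q_0$ automatically, and that term-by-term integration is justified provided no $\infty - \infty$ arises.
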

\begin{proof}
Let $Q$ have density $q$ on $\mathbb{R}^p$.
Since
$$
\log q_0(\bm{\theta}) = -\frac{V(\bm{\theta})}{\lambda} - \log Z_V,
$$
we have
\begin{align*}
\lambda\operatorname{KLD}(Q\Vert Q_0) &= \lambda \int q(\bm{\theta}) \log\frac{q(\bm{\theta})}{q_0(\bm{\theta})} \,\mathrm{d}\bm{\theta} \\
&= \lambda\mathcal{E}(Q) + \int V(\bm{\theta})\,\mathrm{d}Q(\bm{\theta}) + \lambda\log Z_V.
\end{align*}
Rearranging gives the result.
\end{proof}

\noindent Thus we can view the use of Kullback--Leibler regularisation as a modification to the loss function $\mathcal{L}$ through the introduction of a \emph{confining potential} $\int V(\bm{\theta}) \, \mathrm{d}\bm{\theta}$.
Note that the constant $\lambda\log Z_V$ does not affect the minimiser of the objective and can therefore be discarded for optimisation purposes.

\section{Implementational Detail}
\label{app: implementation}

This appendix contains details for how \ac{MFLD} (\Cref{app: implementation MFLD}), \ac{VGD} (\Cref{app: implementation VGD}), and \ac{FVGD} (\Cref{app: implementation FVGD}) were implemented.
Since each method requires access to the variational gradient of the distributional loss function $\mathcal{L}$, we first explain how this is calculated in \Cref{app: var grad}.

\subsection{Computing the Variational Gradient}
\label{app: var grad}

Here we explain how the variational gradients of distributional loss functions can be computed.
Direct from the definition of variational gradient in \Cref{subsec: setup}, together with the chain rule, the variational gradient of the ensemble loss function \eqref{eq: L ensemble} is
\begin{align}
        \nabla_V \mathcal{L}(Q)(\bm{\theta}) = \sum_{j=1}^n (\nabla_2 L)\left( \mathbf{y}_j , \int f(\mathbf{x}_j , \bm{\vartheta}) \; \mathrm{d}Q(\bm{\vartheta}) \right) (\nabla_2 f)(\mathbf{x}_j , \bm{\theta}) , \label{eq: grad ensemble loss}
\end{align}
while the variational gradient of the \ac{LoRA} averaging loss function \eqref{eq: loss LoRA} is
\begin{align*}
        \nabla_V \mathcal{L}(Q)(\bm{\theta}) & = \sum_{j=1}^n (\nabla_2 L) \left( \mathbf{y}_j ,  f\left( \mathbf{x}_j , \mathbf{W} + \int \tilde{\mathbf{B}} \tilde{\mathbf{A}} \; \mathrm{d}Q(\tilde{\mathbf{A}},\tilde{\mathbf{B}}) \right) \right) \\
        & \hspace{30pt} \times (\nabla_2 f)\left( \mathbf{x}_j , \mathbf{W} + \int \tilde{\mathbf{B}} \tilde{\mathbf{A}} \; \mathrm{d}Q(\tilde{\mathbf{A}},\tilde{\mathbf{B}}) \right) \left( \frac{\partial}{\partial \mathbf{A}} , \frac{\partial}{\partial \mathbf{B}} \right) (\mathbf{B} \mathbf{A})
\end{align*}
where $\bm{\theta} = (\mathbf{A},\mathbf{B})$.
The individual terms appearing in these gradients can be efficiently computed using Jacobian-vector products in parallel.
However, since we will only query the variational gradient on discretely supported measures $Q_m$, it is also possible to calculate variational gradients in a single step using the fact that
\begin{align}
\nabla_V \mathcal{L}(Q_m)(\bm{\theta}_i) = m \nabla_{\bm{\theta}_i} \mathcal{L}(Q_m), \qquad Q_m = \frac{1}{m} \sum_{i=1}^m \delta_{\bm{\theta}_i} . \label{eq: simple var grad}
\end{align}
This simpler strategy incurs a higher memory cost due to the need to work with the augmented parameter $\bm{\theta}_{1:m}$ vector of length $md$, but is easier to implement.
For our experiments in \Cref{sec: experiments}, the simple strategy \eqref{eq: simple var grad} was used for both \ac{MFLD} and \ac{VGD}.
Indeed, letting
\begin{align}
    \bm{\theta}_{1:m} = \left[ \begin{array}{c} \bm{\theta}_1 \\ \vdots \\ \bm{\theta}_m \end{array} \right] , \; F(\bm{\theta}_{1:m}) = - m \mathcal{L}(Q_m)  , \; (\nabla F)(\bm{\theta}_{1:m}) = \left[ \begin{array}{c} \nabla_{\bm{\theta}_1}F(\bm{\theta}_{1:m}) \\ \vdots \\ \nabla_{\bm{\theta}_m}F(\bm{\theta}_{1:m}) \end{array} \right] ,   \label{eq: def F}
\end{align}
the update equation \eqref{eq: MFLD eps} for \ac{MFLD} becomes
\begin{align}
    \bm{\theta}_{1:m}^{t+1} = \bm{\theta}_{1:m}^t + \epsilon (\nabla F)(\bm{\theta}_{1:m}^t) + \sqrt{2 \lambda \epsilon} \mathbf{Z}_{1:m}^t  \label{eq: vector MFLD}
\end{align}
where the $\mathbf{Z}_{1:m}^t$ are standard Gaussian.
On the other hand, for the experiments involving \ac{FVGD} in \Cref{sec: ensembles}, the memory-efficient implementation of \eqref{eq: grad ensemble loss} was used due to the additional requirement to compute Jacobian-vector products involving $\nabla_2 f$ in \eqref{eq: proj_fvgd}.

\subsection{Mean Field Langevin Dynamics}
\label{app: implementation MFLD}

Following standard practice in deep learning, we employed stochastic gradients based on a minibatch of size $B = 256$ (for the experiments in \Cref{sec: ensembles}) or $B = 16$ (for the experiments in \Cref{sec: LoRA averaging}).
In addition, for the experiments in \Cref{sec: LoRA averaging} we in practice implement a momentum-based optimiser \citep[Adam;][]{kingma2014adam} in preference to the stochastic gradient descent in \eqref{eq: vector MFLD}, following several other authors including \citet{nitanda2025propagation}.

For the experiments in \Cref{sec: ensembles}, \ac{MFLD} was run for $20{,}000$ steps with step size $\epsilon = 0.1$ on both \textbf{Spiral} and \textbf{MNIST}.
For the experiments in \Cref{sec: LoRA averaging}, the full set of hyperparameters is given in \Cref{tab:setup_hyperparams}; in summary, $3$ epochs at learning rate $\epsilon = 10^{-4}$ with $\lambda = 10^{-5}$, following \citet{nitanda2025propagation}.

\subsection{Variational Gradient Descent}
\label{app: implementation VGD}

For a basic implementation of \ac{VGD} using automatic differentiation, let $F$ be defined as in \eqref{eq: def F}, and in addition let $[\mathbf{K}(\bm{\theta}_{1:m})]_{i,j} = k(\bm{\theta}_i , \bm{\theta}_j)$ and $[\nabla_2 \mathbf{K}(\bm{\theta}_{1:m})]_{i,j} = (\nabla_2 k)(\bm{\theta}_i , \bm{\theta}_j)$.
Then the system of \acp{ODE} can be written as
\begin{align}
    \frac{ \mathrm{d}\bm{\theta}_{1:m} }{ \mathrm{d}t } & = \frac{1}{m} \mathbf{K}(\bm{\theta}_{1:m}) (\nabla F)(\bm{\theta}_{1:m}) + \frac{\lambda}{m} (\nabla_2 \mathbf{K})(\bm{\theta}_{1:m}) \mathbf{1} \label{eq: vgd ode}
\end{align}
and any suitable numerical method for solving \acp{ODE} can be applied.
In practice we implement a stochastic gradient momentum-based optimiser \citep[Adam;][]{kingma2014adam} in preference to an Euler discretisation of \eqref{eq: vgd ode}, following standard practice for Stein variational gradient descent \citep[a special case of VGD when $\mathcal{L}$ is linear;][]{liu2016stein}.
Again, we employed a minibatch of size $B = 256$ (for the experiments in \Cref{sec: ensembles}) or $B = 16$ (for the experiments in \Cref{sec: LoRA averaging}).

For all the experiments we report, the Gaussian kernel $k(\bm{\theta},\bm{\vartheta}) = \exp(-\ell^{-2} \| \bm{\theta} - \bm{\vartheta} \|^2)$ was used, with length-scale $\ell$ adaptively selected using the median heuristic \citep{garreau2017large}
$$
\ell^2 \equiv \ell(\bm{\theta}_{1:m})^2 = \frac{1}{2} \text{median}\{ \|\bm{\theta}_i - \bm{\theta}_j\|^2 : 1 \leq i < j \leq m \} .
$$
For the experiments in \Cref{sec: ensembles}, \ac{VGD} was run for $20{,}000$ optimisation steps with step size $\epsilon = 0.5$ on \textbf{Spiral} and $\epsilon = 0.1$ on \textbf{MNIST}.
For both datasets, \ac{VGD} was initialised from $4{,}000$ pre-training steps of \ac{MFLD} with step size $0.1$; we note that this initialisation is applied identically to \ac{VGD} and \ac{FVGD} but not to the \ac{MFLD} baseline, which is trained for the full $20{,}000$ steps.
For the experiments in \Cref{sec: LoRA averaging}, hyperparameters are as given in \Cref{tab:setup_hyperparams}.

\subsection{Functional VGD}
\label{app: implementation FVGD}

Similarly to the other baselines, we in practice employed a stochastic gradient momentum-based optimiser \citep[Adam;][]{kingma2014adam} with minibatch of size $B = 256$ for the experiments that we report in \Cref{sec: ensembles}.

For all experiments we used the minibatch-dependent kernel $k(f,f') = \exp( - \ell^{-2}\|\bm{\sigma}(f)^{1/2} - \bm{\sigma}(f')^{1/2}\|^2)$, where $\bm{\sigma}(f) \in \mathbb{R}^B$ is defined component-wise as
$$
\sigma(f)_i = \frac{1}{1 + e^{-f(\tilde{\mathbf{x}}_i)} }
$$
for each $f \in \mathfrak{F}$, where the $\tilde{\mathbf{x}}_i$ are the inputs sampled in the current minibatch.
This choice of $k$ can be interpreted as the Hellinger kernel applied to $\bm{\sigma}(f)$ and $\bm{\sigma}(f')$, and is therefore a valid kernel.
For our experiments in \Cref{sec: ensembles}, $f$ represents the model logits, and thus $\bm{\sigma}(f)$ is the vector containing the class-specific probabilities output by the model.
The length-scale $\ell = 0.1$ was fixed throughout.
\ac{FVGD} was run for $20{,}000$ steps with step size $\epsilon = 0.005$ on \textbf{Spiral} and $\epsilon = 0.001$ on \textbf{MNIST}, in each case initialised from $4,000$ pre-training steps of \ac{MFLD} with step size $0.1$.

\begin{remark}[Alternative kernels in \ac{FVGD}]
Several alternative choices for the kernel $k$ could be considered in future work.
    In a related context, a similarity measure with respect to the gradients of the model output, $ \nabla_1 f(\mathbf{x}, \bm{\theta})$, was considered in \citet{trinh2024input}. 
    As another possibility, to promote functional diversity without degrading uncertainty one could measure similarity with respect to $\left\{f(\tilde{\mathbf{x}}, \bm{\theta})\right\}_{\tilde{\mathbf{x}}\in \nu}$, where $\nu$ is a continuous distribution over the data domain, or a batch from an unlabelled validation set \citep[see Section 3.1.2 of][]{wang2018function}. 
\end{remark}

\subsection{Hardware and Software Used}

The classification experiments of \Cref{sec: ensembles} were run on a single workstation with the following specifications:

\begin{center}
\footnotesize
\begin{tabular}{@{}ll@{}}
\toprule
\textbf{Component} & \textbf{Specification} \\
\midrule
\multicolumn{2}{@{}l}{\textit{Hardware}} \\
CPU              & 12th Gen Intel(R) Core(TM) i9-12900K (24) @ 5.20 GHz \\
GPU             & NVIDIA GeForce RTX 4090 \\
Memory  & 125.48 GiB  \\
\addlinespace
\multicolumn{2}{@{}l}{\textit{Software}} \\
Python         &  3.12.12  \\
PyTorch          &  2.7.1+cu128  \\
OS           & Ubuntu noble 24.04 x86\_64 \\
Kernel          & Linux 6.17.0-35-generic \\
\bottomrule
\end{tabular}
\end{center}

\noindent The \ac{LLM} experiments of \Cref{sec: LoRA averaging} were run on a compute environment with the following specifications:

\begin{center}
\footnotesize
\begin{tabular}{@{}ll@{}}
\toprule
\textbf{Component} & \textbf{Specification} \\
\midrule
\multicolumn{2}{@{}l}{\textit{Hardware}} \\
GPU              & 8$\times$ NVIDIA H100 80\,GB SXM5 \\
CPU              & 128 cores \\
RAM              & 256\,GB \\
\addlinespace
\multicolumn{2}{@{}l}{\textit{Software}} \\
Python           & 3.10 \\
PyTorch          & 2.1.2+cu121 \\
Transformers     & 4.36.0 (Llama-3-8B); 4.45.0 (Llama-3.2-3B/1B) \\
Accelerate       & 0.25.0 (Llama-3-8B); 1.13.0 (Llama-3.2-3B/1B) \\
PEFT             & custom fork (commit released with code) \\
\bottomrule
\end{tabular}
\end{center}

\section{Experimental Protocol}
\label{app: protocol}

This appendix contains all of the details needed to reproduce the experimental results which we report.

\subsection{Ensemble Methods for Classification}
\label{app: ensemble protocol}

\paragraph{Outline of the Classification Task}

For these first experiments we consider classification tasks and employ the cross-entropy loss $L(\bm{y},\bm{p}) = - \sum_i y_i \log(p_i)$ where $\bm{y}$ is a one-hot vector indicating the true label and $\bm{p}$ is a vector of predicted class probabilities; we take $f(\mathbf{x},\bm{\theta}) = \mathrm{logit}(\bm{p})$ to be the output from the machine learning model.

\paragraph{Models Considered}

For the \textbf{Spirals} experiments, we used an ensemble of $m=10$ particles, where each particle is a fully connected neural network mapping the two-dimensional input to three output logits corresponding to the three spiral classes. Each particle has architecture $2 \rightarrow 2 \rightarrow 3$, with a hidden layer of width 2. For the single-network baseline, we used a larger fully connected neural network with architecture $2 \rightarrow 20 \rightarrow 3$.

For the \textbf{MNIST} experiments, we used an ensemble of $m=10$ particles, where each particle is a fully connected neural network with one hidden layer of width 4 and ReLU activation. Each network maps a flattened $28 \times 28$ MNIST image to 10 output logits corresponding to the digit classes. Thus, each particle has architecture $784 \rightarrow 4 \rightarrow 10$. The models were trained for 20,000 steps with batch size 256 and random seed 42. For the single-network baseline, we used a larger fully connected neural network with two hidden layers, each of width 16, giving the architecture $784 \rightarrow 16 \rightarrow 16 \rightarrow 10$.

\paragraph{Spirals}
The \textbf{Spirals} dataset was generated synthetically in two dimensions with three classes. For each class \(i \in \{0,1,2\}\), \(100\) observations were generated by taking radial coordinates \(r\) equally spaced on \([0,1]\) and angular coordinates on the interval \([4i,4(i+1)]\), perturbed by Gaussian noise with standard deviation \(0.2\). The Cartesian coordinates were then computed as \(x=r\sin(t)\) and \(y=r\cos(t)\), producing three noisy spiral arms, one for each class. This gives \(300\) observations in total, with balanced class labels.

We used a stratified train--test split with test ratio \(0.2\). The split was performed independently within each class, so that \(20\%\) of the observations from each spiral arm were assigned to the test set and the remaining \(80\%\) to the training set. Thus, under the default configuration, the training set contains \(240\) observations and the test set contains \(60\) observations, with equal class proportions preserved in both subsets.

\paragraph{MNIST}
For the \textbf{MNIST} experiments, we used the standard MNIST database of handwritten digits. The data were
obtained via \texttt{torchvision.datasets.MNIST}, which downloads the dataset to a local data
directory when required. MNIST comprises \(28 \times 28\) greyscale images of handwritten digits
from ten classes, labelled \(0,\ldots,9\). Each image was transformed into a tensor and normalised
using mean \(0.1307\) and standard deviation \(0.3081\), before being flattened into a
\(784\)-dimensional feature vector for the fully connected neural networks.

We retained the canonical MNIST split, using the provided training set for optimisation and the
provided test set for evaluation. Consequently, the training set contained \(60{,}000\) examples and
the test set contained \(10{,}000\) examples; no additional train--test resampling was applied.
Training was performed using mini-batches of size \(256\) sampled with shuffling, while test batches
were evaluated without shuffling. The MNIST dataset is derived from the original NIST digit
datasets; its copyright is held by Yann LeCun and Corinna Cortes, and it is distributed under the
Creative Commons Attribution--Share Alike 3.0 licence.

\subsection{LoRA Averaging for Fine-Tuning LLMs}
\label{app: LLM protocol}
This section provides full technical details for the \ac{LLM} experiments that we report in the main text.

\paragraph{Outline of the Reasoning Task}
Our experimental assessment was based on the Common\-sense-170K training set and the associated eight-task evaluation suite assembled by \citet{hu2023llmadapters}, covering skills such as binary question answering, physical and social commonsense, contextual completion, pronoun disambiguation, and science-style multiple-choice QA, while enabling a unified discriminative evaluation protocol.
The constituent benchmarks are BoolQ \citep{clark2019boolq}, PIQA \citep{bisk2020piqa}, SocialIQA \citep{sap2019socialiqa}, HellaSwag \citep{zellers2019hellaswag}, WinoGrande \citep{sakaguchi2021winogrande}, ARC-Easy and ARC-Challenge \citep{clark2018arc}, and OpenBookQA \citep{mihaylov2018openbookqa}.
Training used the Commonsense-170K set of ${\approx}170{,}000$ samples; evaluation used the held-out evaluation split of each constituent benchmark, with sizes as given in \Cref{tab:setup_benchmarks}.

Both the training set and the evaluation files were obtained from the LLM-Adapters repository, whose data are distributed under the Open Data Commons Attribution (ODC-By) licence and code under Apache-2.0; the constituent benchmarks retain their original licences, listed in \Cref{tab:setup_benchmarks}.
In accordance with the ODC-By licence, we acknowledge that this work contains information from the Commonsense-170K dataset of \citet{hu2023llmadapters} (\url{https://github.com/AGI-Edgerunners/LLM-Adapters}), which is made available under the ODC Attribution License.
The Llama-3-8B and Llama-3.2-3B/1B base models are used under the Meta Llama~3 and Llama~3.2 Community License Agreements.

\begin{table}[t!]
\caption{\textbf{Commonsense evaluation benchmarks.} Validation splits are used where official test labels are not public, following \citet{hu2023llmadapters}.}
\label{tab:setup_benchmarks}
\centering
\footnotesize
\begin{tabular}{@{}llrcl@{}}
\toprule
\textbf{Benchmark} & \textbf{Task type} & \textbf{Eval. size} & \textbf{Choices} & \textbf{Data licence} \\
\midrule
BoolQ         & binary QA               & 3,270  & 2    & CC BY-SA 3.0 \\
PIQA          & physical commonsense    & 1,838  & 2    & AFL 3.0 \\
SocialIQA     & social commonsense      & 1,954  & 3    & CC BY 4.0 \\
HellaSwag     & sentence completion     & 10,042 & 4    & MIT \\
WinoGrande    & pronoun disambiguation  & 1,267  & 2    & CC BY \\
ARC-Easy      & science QA              & 2,376  & 3--5 & CC BY-SA 4.0 \\
ARC-Challenge & science QA (hard)       & 1,172  & 3--5 & CC BY-SA 4.0 \\
OpenBookQA    & science QA (open-book)  & 500    & 4    & Apache 2.0 \\
\bottomrule
\end{tabular}
\end{table}
\FloatBarrier

For these experiments the training loss is the per-token cross-entropy over the answer span,
\[
L(\mathbf{y}_j , f(\mathbf{x}_j,\bm{\theta})) = - \sum_{t} \log p(y_{j,t} \mid y_{j,<t} , \mathbf{x}_j , \bm{\theta}) ,
\]
where $\mathbf{x}_j$ is the prompt, $\mathbf{y}_j$ the reference answer, and $p(\,\cdot \mid y_{j,<t}, \mathbf{x}_j , \bm{\theta})$ is the next-token distribution obtained by applying softmax to the logits $f(\mathbf{x}_j,\bm{\theta})$.
It is these logits that are averaged under \ac{OA}.

\begin{remark}[Evaluation on BoolQ]
Generation-based evaluation of ensemble models (greedy decoding over averaged logits) suffers from mode collapse on the BoolQ task, but only because greedy argmax over averaged logits introduces a majority-class bias for this boolean task.
The log-probability evaluation, which directly compares $P(\text{each choice} \mid \text{prompt})$, avoids this issue and was therefore used for BoolQ (only) in our assessment.
We stress that this concerns \emph{evaluation} only and does not alter the training objective: training minimises the per-token cross-entropy above for every task, and the two evaluation protocols differ only in how a prediction is extracted from the trained model at test time.
\end{remark}

\FloatBarrier

\paragraph{Models Considered}

To assess generality, we evaluate our methods on three models of different sizes within the same architecture family; Llama-3.2-1B \citep{meta2024llama3.2-1b}, Llama-3.2-3B \citep{meta2024llama3.2-3b} and Llama-3-8B \citep{meta2024llama3-8b}.
The specifications for these models are outlined in \Cref{tab:setup_models}.
All models use FP16 precision and a maximum token length of 256.
For all tasks except BoolQ, evaluation uses beam-search generation with beam width $4$ and regular-expression answer extraction, following the protocol of \citet{nitanda2025propagation}.
BoolQ instead uses the log-probability comparison described above.

\begin{table}[t!]
\caption{\textbf{Base model specifications.}
GQA = grouped query attention; the number of KV heads affects LoRA particle geometry on \texttt{k\_proj}/\texttt{v\_proj} layers.}
\label{tab:setup_models}
\centering
\footnotesize
\begin{tabular}{@{}l ccccc@{}}
\toprule
\textbf{Model} & \textbf{Params} & \textbf{Layers} & \textbf{Hidden} & \textbf{Attn heads} & \textbf{KV heads} \\
\midrule
Llama-3-8B     & 8.0B  & 32 & 4096 & 32 & 8 (GQA) \\
Llama-3.2-3B   & 3.2B  & 28 & 3072 & 24 & 8 (GQA) \\
Llama-3.2-1B   & 1.2B  & 16 & 2048 & 32 & 8 (GQA) \\
\bottomrule
\end{tabular}
\vspace{-4pt}
\end{table}

\FloatBarrier

\begin{table}[t!]
\caption{\textbf{Training hyperparameters.}
All methods share identical settings except the optimiser and temperature $\lambda$.}
\label{tab:setup_hyperparams}
\centering
\footnotesize
\begin{tabular}{@{}ll@{}}
\toprule
\textbf{Hyperparameter} & \textbf{Value} \\
\midrule
\multicolumn{2}{@{}l}{\textit{Training}} \\
Epochs                   & 3 (joint OA: 1) \\
Batch size               & 16 \\
Micro-batch size         & 16 \\
Learning rate            & $1 \times 10^{-4}$ \\
Weight decay             & 0.0 \\
Warmup steps             & 100 \\
Optimiser                & AdamW \\
Precision                & FP16  \\
Gradient checkpointing   & enabled \\
Max sequence length      & 256 tokens \\
\addlinespace
\multicolumn{2}{@{}l}{\textit{Data}} \\
Training data            & \texttt{commonsense\_170k.json} (170k samples) \\
Training steps/epoch     & $\sim$10,625 \\
Total training steps     & $\sim$31,875 (3 epochs) \\
\addlinespace
\multicolumn{2}{@{}l}{\textit{Method-specific}} \\
MFLD temperature         & $\lambda = 10^{-5}$ \citep{nitanda2025propagation} \\
VGD temperature          & $\lambda = 10^{-7}$ (Llama-3-8B) \\
VGD kernel               & RBF with median heuristic bandwidth \\
VGD kernel scope         & per LoRA layer, joint $(A,B)$ particle \\
\bottomrule
\end{tabular}
\vspace{-4pt}
\end{table}
\FloatBarrier

\section{Additional Experimental Results}
\label{app: additional}

This section contains several different sets of experimental results, to supplement those reported in the main text.

\subsection{Class Assignment for Spirals}

The learned class assignments for the \textbf{Spirals} dataset are displayed in \Cref{fig:spiral_all_boundary}.

\begin{figure*}[t!]
    \centering
    \begin{subfigure}[t]{0.19\linewidth}
        \centering
        \includegraphics[width=\linewidth]{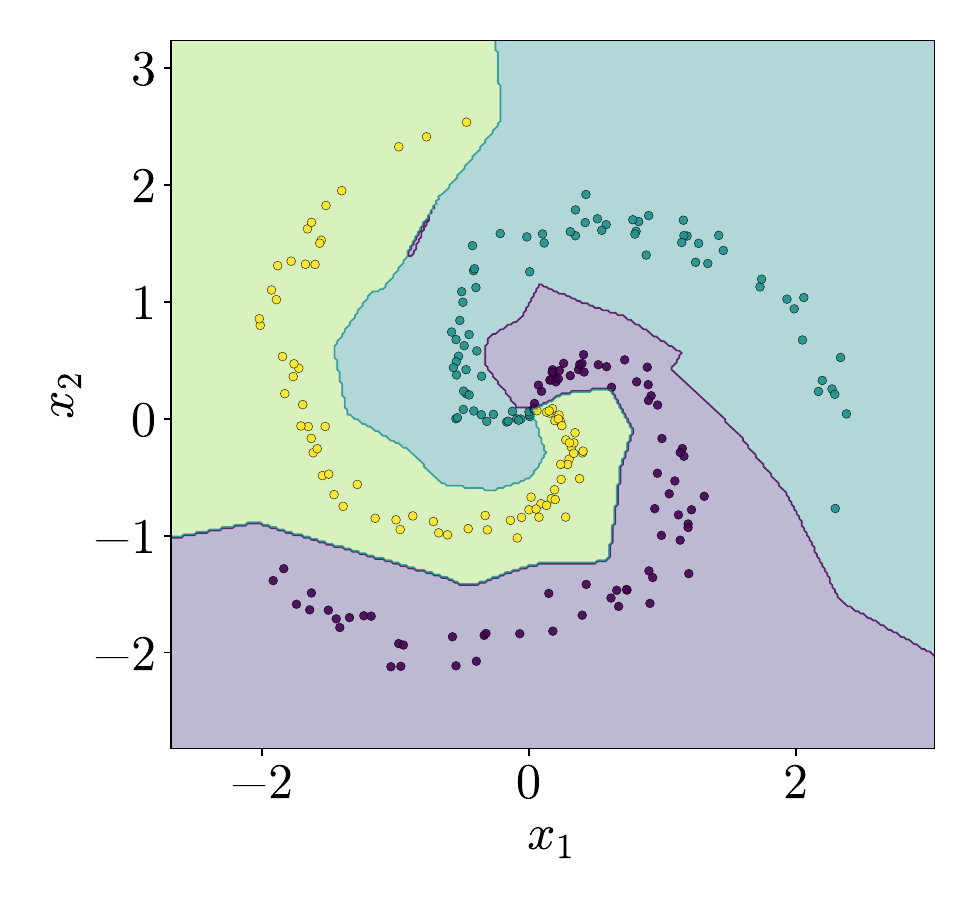}
        \caption{Single}
    \end{subfigure}
    \hfill
    \begin{subfigure}[t]{0.19\linewidth}
        \centering
        \includegraphics[width=\linewidth]{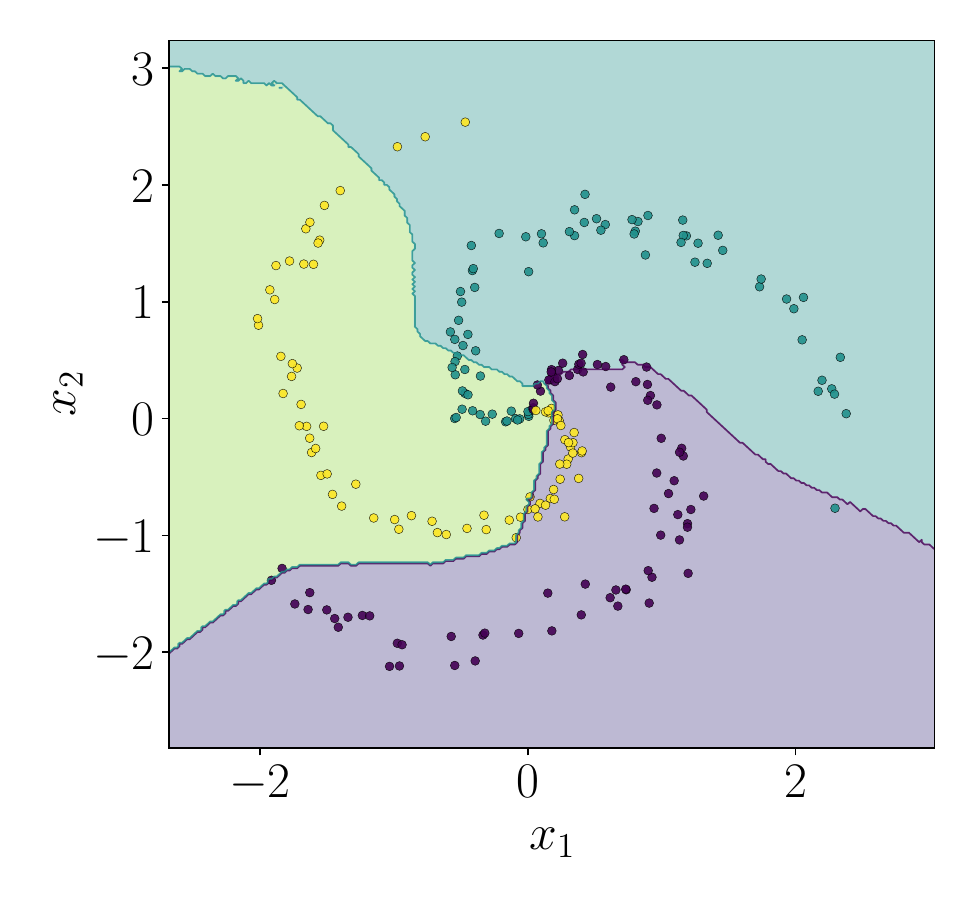}
        \caption{Independent}
    \end{subfigure}
    \hfill
    \begin{subfigure}[t]{0.19\linewidth}
        \centering
        \includegraphics[width=\linewidth]{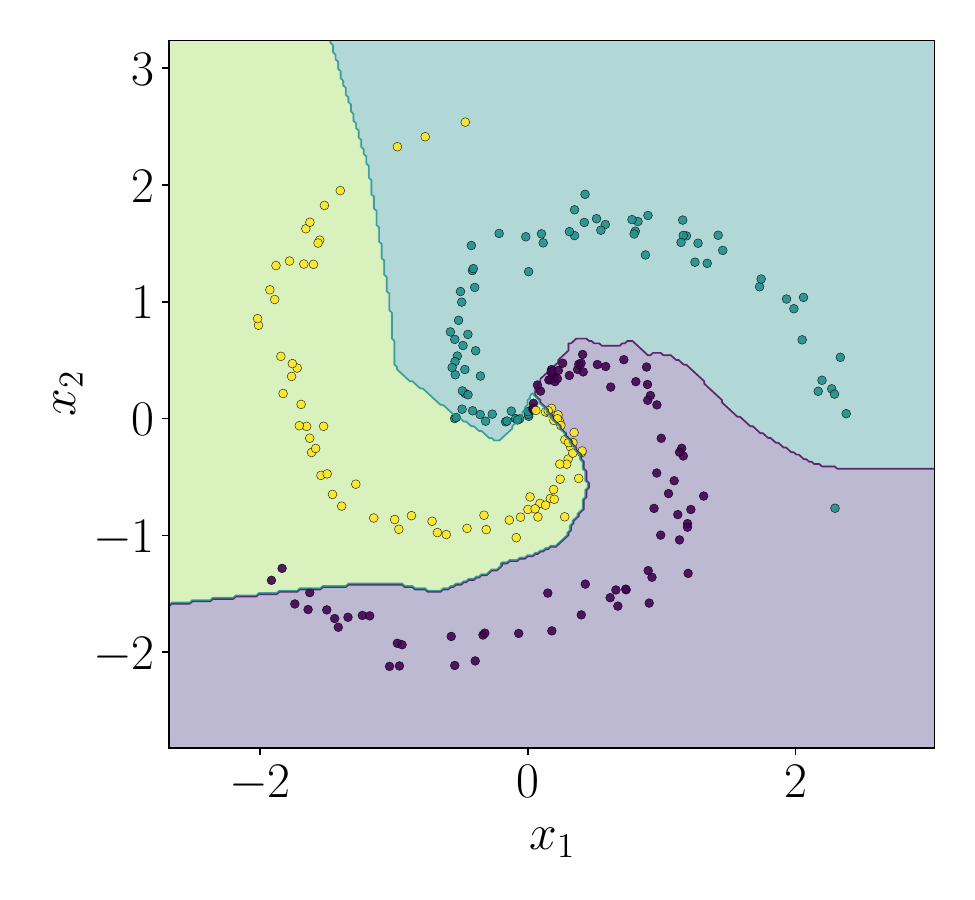}
        \caption{FVGD}
    \end{subfigure}
    \hfill
    \begin{subfigure}[t]{0.19\linewidth}
        \centering
        \includegraphics[width=\linewidth]{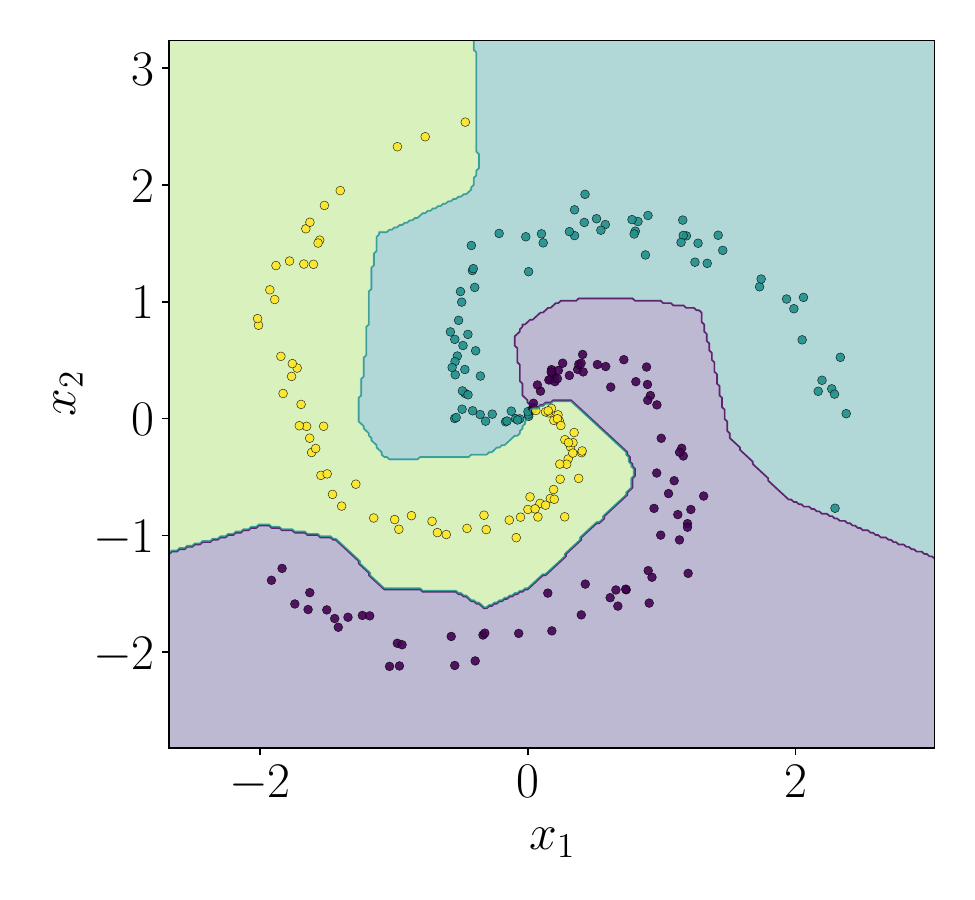}
        \caption{VGD}
    \end{subfigure}
    \hfill
    \begin{subfigure}[t]{0.19\linewidth}
        \centering
        \includegraphics[width=\linewidth]{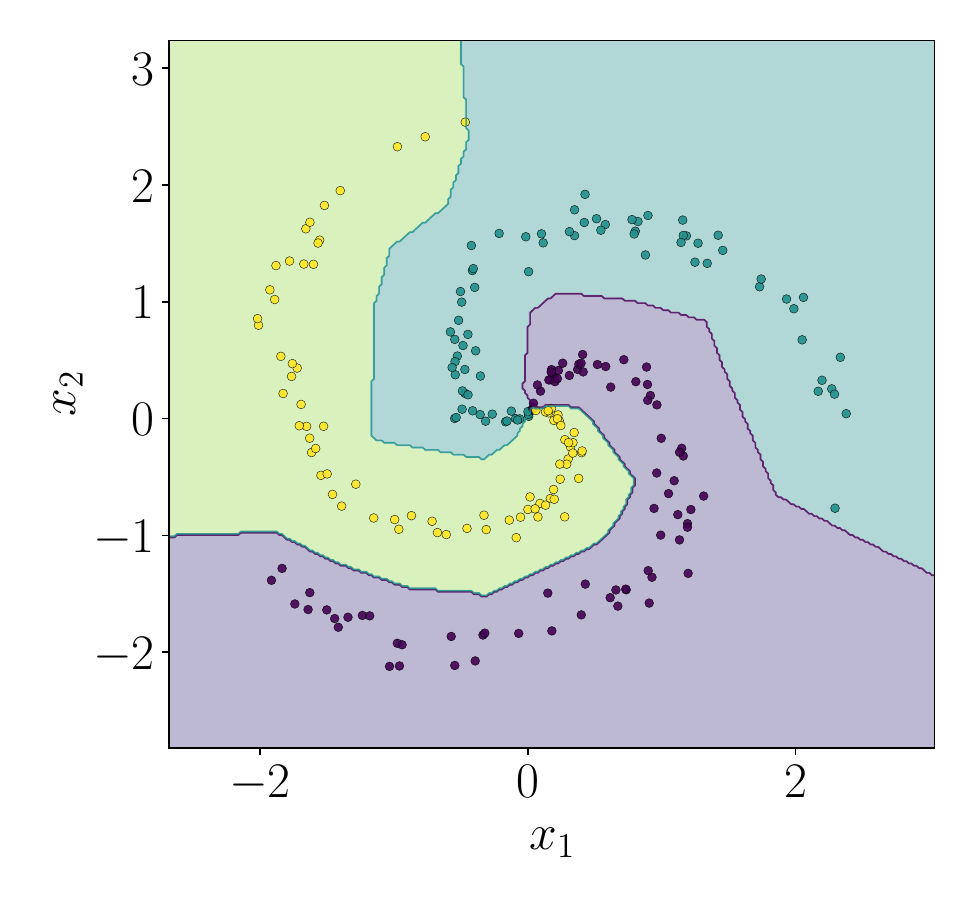}
        \caption{MFLD}
    \end{subfigure}

    \caption{\textbf{Spiral} dataset:  decision boundaries under a capacity-controlled setup. 
    Independent, MFLD, VGD, and FVGD all use an ensemble of $m=10$ MLPs with one hidden layer of width 2, while Single uses one hidden layer of width 20.
}
    \label{fig:spiral_all_boundary}
\end{figure*}

\subsection{A Closer Look at Performance on LoRA}
\label{app: visualise boxplot}

The rank-$32$ block of \Cref{tab:llm-main-best} in the main text is perhaps more easily understood through the visualisation in \Cref{fig:boxplot_seeds}.
This helps to make clear that, while the \emph{absolute differences in average performance} are modest, the \emph{distribution} of performance differs across methods; \ac{VGD} attains the highest median and the smallest standard error, with no isolated bad seed in these eight runs.
The raw data underlying these box plots are presented in \Cref{tab:per_seed_full}.

\begin{table}[!t]
  \centering
  \footnotesize
  \setlength{\tabcolsep}{4pt}
  \caption{\textbf{Per-seed accuracies (\%) on Llama-3-8B.}
  Rank-$32$ adapters are trained for $3$ epochs.
  Each row reports an independent run initialized with the indicated random seed.
  Means and standard errors (SE) over the $8$ runs are reported at the bottom of each block.
  Best mean results in each column are shown in bold.}
  \label{tab:per_seed_full}

  \begin{tabular}{@{}lccccccccc@{}}
    \toprule
    Seed & SIQA & PIQA & Wino & OBQA & ARC-c & ARC-e
    & BoolQ & Hella & \textbf{Avg} \\
    \midrule

    \multicolumn{10}{l}{\emph{Standard (MFLD with $\lambda=0$, vanilla AdamW)}} \\
    \quad $0$
    & $80.78$ & $88.61$ & $86.03$ & $87.86$
    & $79.12$ & $89.95$ & $74.77$ & $95.09$ & $85.28$ \\
    \quad $1$
    & $80.63$ & $87.55$ & $86.23$ & $85.63$
    & $78.81$ & $90.02$ & $74.14$ & $95.66$ & $84.83$ \\
    \quad $2$
    & $80.07$ & $88.81$ & $85.99$ & $85.25$
    & $79.45$ & $90.39$ & $74.43$ & $95.59$ & $85.00$ \\
    \quad $3$
    & $79.27$ & $88.57$ & $86.86$ & $84.53$
    & $79.72$ & $89.74$ & $76.02$ & $95.28$ & $85.00$ \\
    \quad $42$
    & $80.57$ & $88.18$ & $85.93$ & $86.68$
    & $78.68$ & $90.21$ & $76.27$ & $94.97$ & $85.19$ \\
    \quad $3407$
    & $80.11$ & $88.73$ & $86.35$ & $86.02$
    & $78.22$ & $90.89$ & $73.84$ & $95.78$ & $84.99$ \\
    \quad $2025$
    & $80.14$ & $88.64$ & $85.59$ & $83.88$
    & $77.68$ & $90.39$ & $74.32$ & $94.63$ & $84.41$ \\
    \quad $2026$
    & $80.27$ & $87.97$ & $87.25$ & $85.49$
    & $78.94$ & $90.48$ & $74.78$ & $95.08$ & $85.03$ \\
    \cmidrule(lr){1-10}
    \quad Mean
    & $80.23$ & $88.38$ & $86.28$ & $85.67$
    & $78.83$ & $90.26$ & $74.82$ & $95.26$ & $84.97$ \\
    \quad SE
    & $0.17$ & $0.16$ & $0.19$ & $0.44$
    & $0.23$ & $0.13$ & $0.31$ & $0.14$ & $0.09$ \\

    \midrule
    \multicolumn{10}{l}{\emph{MFLD, $\lambda=10^{-5}$}} \\
    \quad $0$
    & $80.30$ & $88.47$ & $86.41$ & $85.70$
    & $80.96$ & $90.22$ & $75.32$ & $95.28$ & $85.33$ \\
    \quad $1$
    & $80.66$ & $88.94$ & $85.85$ & $85.75$
    & $79.54$ & $91.56$ & $73.26$ & $95.34$ & $85.11$ \\
    \quad $2$
    & $80.84$ & $88.36$ & $86.47$ & $86.42$
    & $79.86$ & $90.65$ & $74.98$ & $94.52$ & $85.26$ \\
    \quad $3$
    & $80.49$ & $87.62$ & $86.27$ & $84.53$
    & $78.96$ & $89.21$ & $75.58$ & $95.90$ & $84.82$ \\
    \quad $42$
    & $80.46$ & $87.82$ & $86.28$ & $87.24$
    & $79.47$ & $89.58$ & $75.29$ & $95.37$ & $85.19$ \\
    \quad $3407$
    & $80.22$ & $87.33$ & $86.06$ & $86.34$
    & $80.36$ & $89.57$ & $75.34$ & $95.01$ & $85.03$ \\
    \quad $2025$
    & $79.80$ & $88.82$ & $86.75$ & $84.57$
    & $79.49$ & $90.71$ & $74.64$ & $95.53$ & $85.04$ \\
    \quad $2026$
    & $79.59$ & $88.89$ & $87.07$ & $88.17$
    & $80.00$ & $90.62$ & $75.16$ & $94.85$ & $85.54$ \\
    \cmidrule(lr){1-10}
    \quad Mean
    & $80.29$ & $88.28$ & $86.39$ & $86.09$
    & $\mathbf{79.83}$ & $90.26$ & $74.95$ & $95.22$ & $85.16$ \\
    \quad SE
    & $0.15$ & $0.22$ & $0.14$ & $0.44$
    & $0.22$ & $0.27$ & $0.26$ & $0.15$ & $0.08$ \\

    \midrule
    \multicolumn{10}{l}{\emph{VGD, $\lambda=10^{-7}$}} \\
    \quad $0$
    & $80.34$ & $88.88$ & $85.66$ & $86.81$
    & $79.76$ & $90.16$ & $75.52$ & $95.47$ & $85.33$ \\
    \quad $1$
    & $81.19$ & $89.31$ & $87.36$ & $85.08$
    & $80.44$ & $90.36$ & $75.27$ & $95.38$ & $85.55$ \\
    \quad $2$
    & $81.60$ & $89.24$ & $86.23$ & $86.84$
    & $79.55$ & $91.41$ & $74.58$ & $95.79$ & $85.66$ \\
    \quad $3$
    & $80.58$ & $87.97$ & $86.82$ & $85.83$
    & $79.72$ & $91.25$ & $75.26$ & $95.97$ & $85.43$ \\
    \quad $42$
    & $80.93$ & $89.02$ & $86.43$ & $87.18$
    & $79.21$ & $90.97$ & $74.03$ & $95.59$ & $85.42$ \\
    \quad $3407$
    & $81.31$ & $89.25$ & $86.39$ & $87.33$
    & $80.20$ & $90.74$ & $74.96$ & $94.99$ & $85.65$ \\
    \quad $2025$
    & $80.29$ & $88.71$ & $86.34$ & $87.78$
    & $80.10$ & $90.79$ & $75.37$ & $95.43$ & $85.60$ \\
    \quad $2026$
    & $80.62$ & $88.49$ & $86.57$ & $86.27$
    & $79.66$ & $90.33$ & $74.99$ & $94.91$ & $85.23$ \\
    \cmidrule(lr){1-10}
    \quad Mean
    & $\mathbf{80.86}$ & $\mathbf{88.86}$ & $\mathbf{86.48}$
    & $\mathbf{86.64}$ & $\mathbf{79.83}$ & $\mathbf{90.75}$
    & $\mathbf{75.00}$ & $\mathbf{95.44}$ & $\mathbf{85.48}$ \\
    \quad SE
    & $0.17$ & $0.16$ & $0.17$ & $0.31$
    & $0.14$ & $0.16$ & $0.17$ & $0.13$ & $0.06$ \\

    \bottomrule
  \end{tabular}
\end{table}

Adjusting for the difficulty of different tasks enables a clearer comparison of methods; this paired comparison is presented in \Cref{tab:llm-paired-bootstrap}.

\begin{figure}[t]
\centering
\includegraphics[width=0.6\textwidth]{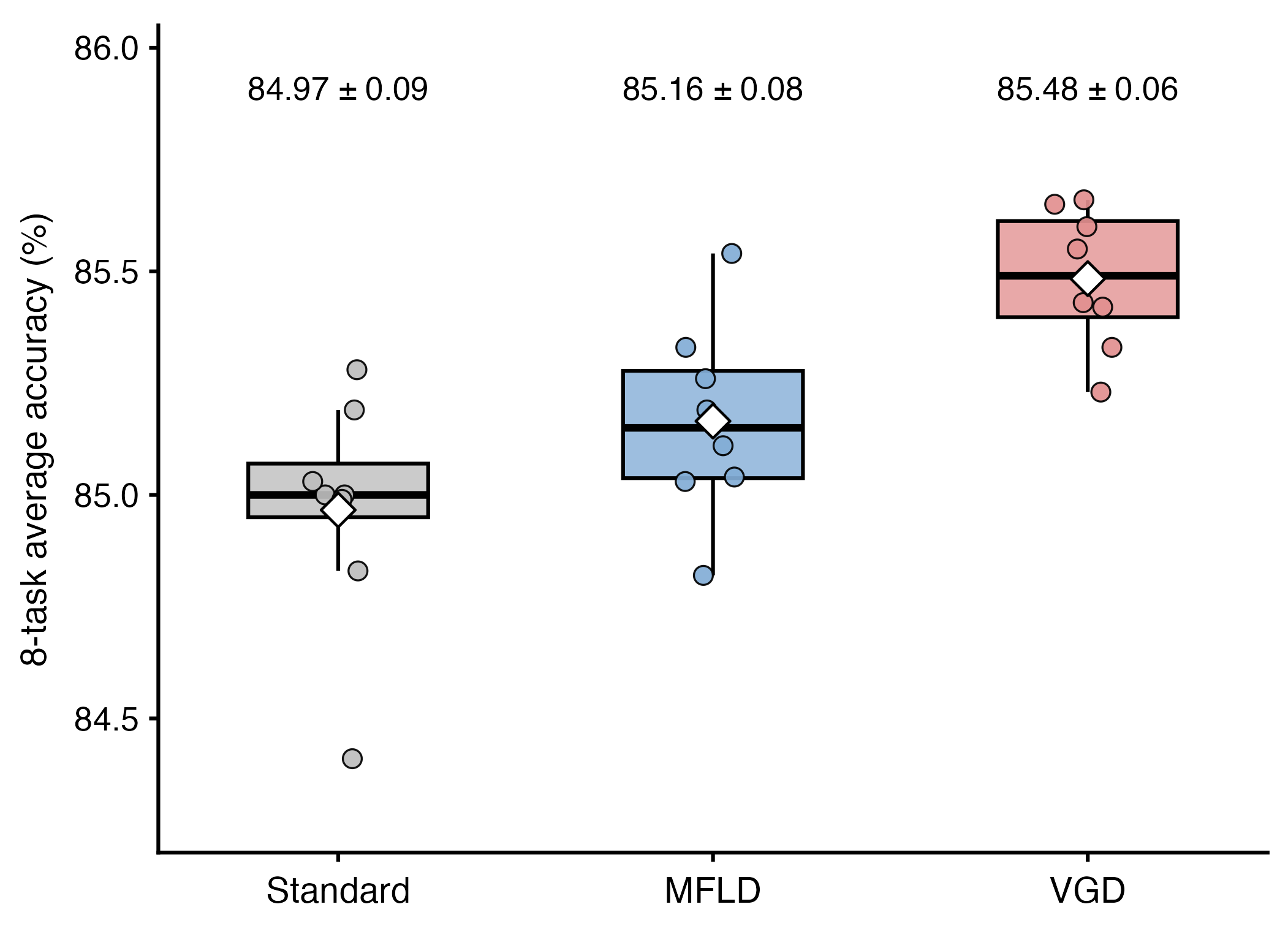}
\caption{\textbf{Distribution of 8-seed average accuracy on commonsense reasoning.} Each point represents one independent training run for a rank-$32$ LoRA adapter on Llama-3-8B, averaged across the eight commonsense benchmarks. Boxes indicate the interquartile range, centre lines indicate medians, whiskers extend to $1.5\times$ the interquartile range, and white diamonds indicate arithmetic means. Labels report mean $\pm$ standard error over the $8$ seeds.}
\label{fig:boxplot_seeds}
\end{figure}

\begin{table}[t]
\caption{\textbf{Paired task-level comparisons corresponding to
\Cref{tab:llm-main-best}.}
Each entry reports the mean paired accuracy difference in percentage points,
followed by a 95\% percentile bootstrap confidence interval obtained by
resampling the eight tasks with replacement.}
\label{tab:llm-paired-bootstrap}

\centering
\footnotesize
\setlength{\tabcolsep}{5pt}

\begin{tabular}{@{}lccc@{}}
\toprule
Setting
& $\Delta_{\mathrm{VGD}-\mathrm{Standard}}$
& $\Delta_{\mathrm{VGD}-\mathrm{MFLD}}$
& $\Delta_{\mathrm{MFLD}-\mathrm{Standard}}$ \\
\midrule

Single rank-$32$ adapter
& $+0.52\;[+0.31,+0.74]$
& $+0.32\;[+0.16,+0.48]$
& $+0.20\;[+0.01,+0.46]$ \\

Single rank-$256$ adapter 
& $+0.79\;[+0.38,+1.24]$
& $+1.71\;[+1.38,+2.04]$
& $-0.92\;[-1.20,-0.63]$ \\

PoC merge $\to$ rank-$256$ 
& $+0.31\;[+0.24,+0.39]$
& $+0.17\;[+0.04,+0.28]$
& $+0.14\;[+0.01,+0.28]$ \\

\bottomrule
\end{tabular}
\end{table}

\FloatBarrier

\FloatBarrier

\subsection{Varying \texorpdfstring{$\lambda$}{lambda}}
\label{app: vary lambda}

\paragraph{MNIST}

The effect of varying $\lambda$ over the \textbf{MNIST} experiment is reported in \Cref{tab:mnist_lambda_ablation}.

\begin{table*}[t!]
    \centering
    \footnotesize
    \caption{Ablation over the regularisation/repulsion parameter $\lambda$ on \textbf{MNIST}.}
    \label{tab:mnist_lambda_ablation}
    \begin{tabular}{llrrrr}
        \toprule
        Method & $\lambda$ & Train Loss & Test Loss & Train Acc. & Test Acc. \\
        \midrule
        \multirow{13}{*}{MFLD}
        & $0$                   & 0.2228 & 0.2338 & 0.938 & 0.932 \\
        & $10^{-6}$             & 0.2231 & 0.2337 & 0.938 & 0.931 \\
        & $3\times 10^{-6}$     & 0.2232 & 0.2338 & 0.938 & 0.931 \\
        & $10^{-5}$             & 0.2232 & 0.2337 & 0.938 & 0.931 \\
        & \textbf{$3\times 10^{-5}$}     & \textbf{0.2231} & \textbf{0.2336} & \textbf{0.938} & \textbf{0.932} \\
        & $10^{-4}$             & 0.2237 & 0.2337 & 0.938 & 0.931 \\
        & $10^{-3}$             & 0.2234 & 0.2337 & 0.938 & 0.931 \\
        & $10^{-2}$             & 0.2229 & 0.2336 & 0.938 & 0.931 \\
        & $10^{-1}$             & 0.2239 & 0.2338 & 0.938 & 0.931 \\
        & $1$                   & 0.2246 & 0.2345 & 0.934 & 0.930 \\
        & $10$                  & 0.2269 & 0.2430 & 0.941 & 0.929 \\
        & $20$                  & 0.2367 & 0.2527 & 0.938 & 0.926 \\
        \midrule
        \multirow{13}{*}{VGD}
        & $0$                   & 0.1143 & 0.1747 & 0.957 & 0.948 \\
        & $10^{-6}$             & 0.1143 & 0.1747 & 0.957 & 0.948 \\
        & $3\times 10^{-6}$     & 0.1143 & 0.1747 & 0.957 & 0.948 \\
        & $10^{-5}$             & 0.1143 & 0.1747 & 0.957 & 0.948 \\
        & $3\times 10^{-5}$     & 0.1142 & 0.1746 & 0.957 & 0.948 \\
        & $10^{-4}$             & 0.1143 & 0.1747 & 0.957 & 0.948 \\
        & $10^{-3}$             & 0.1140 & 0.1748 & 0.961 & 0.948 \\
        & $10^{-2}$             & 0.1148 & 0.1748 & 0.957 & 0.948 \\
        & $10^{-1}$             & 0.1145 & 0.1746 & 0.953 & 0.948 \\
        & \textbf{$1$}          & \textbf{0.1139} & \textbf{0.1746} & \textbf{0.961} & \textbf{0.948} \\
        & $10$                  & 0.1147 & 0.1747 & 0.961 & 0.948 \\
        & $20$                  & 0.1143 & 0.1747 & 0.957 & 0.948 \\
        \midrule
        \multirow{13}{*}{FVGD}
        & $0$                   & 0.3180 & 0.3766 & 0.906 & 0.921 \\
        & $10^{-6}$             & 0.3205 & 0.3707 & 0.918 & 0.919 \\
        & $3\times 10^{-6}$     & 0.3138 & 0.3604 & 0.918 & 0.921 \\
        & $10^{-5}$             & 0.3001 & 0.3438 & 0.906 & 0.918 \\
        & $1.7\times 10^{-5}$   & 0.3184 & 0.3483 & 0.902 & 0.920 \\
        & $3\times 10^{-5}$     & 0.3314 & 0.3616 & 0.891 & 0.921 \\
        & \textbf{$10^{-4}$}             & \textbf{0.3095} & \textbf{0.3612} & \textbf{0.918} & \textbf{0.923} \\
        & $10^{-3}$             & 0.3309 & 0.3679 & 0.895 & 0.919 \\
        & $10^{-2}$             & 0.3125 & 0.3514 & 0.902 & 0.914 \\
        & $10^{-1}$             & 1.1941 & 1.1557 & 0.824 & 0.868 \\
        & $1$                   & 2.2974 & 2.3010 & 0.117 & 0.114 \\
        & $10$                  & 2.2919 & 2.2959 & 0.117 & 0.114 \\
        & $20$                  & 2.2962 & 2.3003 & 0.117 & 0.114 \\
        \bottomrule
    \end{tabular}
\end{table*}
    
\FloatBarrier

\paragraph{LoRA Averaging for Fine-Tuning LLMs}

\Cref{tab:full_lambda_sweep_appendix} reports a single-seed $\lambda$ sweep on Llama-3-8B.
The best observed average occurs at $\lambda=10^{-7}$ for rank $32$, $3.2\times10^{-7}$ for rank $128$, and $10^{-6}$ for rank $256$.
Larger values degrade performance sharply.
At rank $32$, for example, $\lambda=10^{-4}$ reduces average accuracy from $85.42\%$ to $74.28\%$, a decrease of $11.14$ percentage points.

\Cref{tab:llm_lambda_model_size} gives the corresponding comparison across model sizes.
The best observed candidates are $10^{-7}$ for 8B, $10^{-6}$ for 3B, and $10^{-5}$ for 1B.
The 1B model is particularly sensitive, reaching $47.81\%$, $60.92\%$, and $66.34\%$ at the three candidate values, respectively.
This pattern suggests model-dependent sensitivity, although a single-seed sweep does not establish a scaling law.

\begin{table}[t!]
\centering
\footnotesize
\setlength{\tabcolsep}{3pt}
\caption{\textbf{Full $\lambda$ sweep on Llama-3-8B} (accuracy, \%).
Results for single seed are reported after 3 epochs.
Averages are computed before rounding.
Best results in each column within each rank block are shown in bold.}
\label{tab:full_lambda_sweep_appendix}

\begin{tabular}{@{}ll*{9}{c}@{}}
\toprule
Method & $\lambda$ & SIQA & PIQA & Wino & OBQA
& ARC-c & ARC-e & BoolQ & Hella & Avg \\
\midrule

\multicolumn{11}{l}{\emph{Rank 32}} \\
Standard & $0$
& 80.57 & 88.18 & 85.93 & 86.68
& 78.68 & 90.21 & \textbf{76.27} & 94.97 & 85.19 \\

MFLD & $10^{-5}$
& 80.46 & 87.82 & 86.28 & \textbf{87.24}
& 79.47 & 89.58 & 75.29 & 95.37 & 85.19 \\

VGD & $0$
& 80.14 & 88.21 & 86.17 & 86.01
& 79.02 & 90.36 & 74.81 & 95.41 & 85.02 \\

VGD & $10^{-7}$
& \textbf{80.93} & \textbf{89.02} & 86.43 & 87.18
& 79.21 & \textbf{90.97} & 74.03 & \textbf{95.59}
& \textbf{85.42} \\

VGD & $3.2{\times}10^{-7}$
& 80.72 & 88.66 & 86.51 & 86.23
& 79.44 & 90.61 & 75.06 & 95.23 & 85.31 \\

VGD & $10^{-6}$
& 80.68 & 88.51 & \textbf{86.62} & 85.94
& \textbf{79.56} & 90.72 & 74.88 & 95.04 & 85.24 \\

VGD & $10^{-5}$
& 80.08 & 87.72 & 85.91 & 85.82
& 78.74 & 89.94 & 74.11 & 94.62 & 84.62 \\

VGD & $3.2{\times}10^{-5}$
& 76.44 & 84.27 & 81.19 & 82.06
& 74.14 & 86.21 & 70.46 & 89.53 & 80.54 \\

VGD & $10^{-4}$
& 70.83 & 77.18 & 74.29 & 73.04
& 67.43 & 83.67 & 63.19 & 84.63 & 74.28 \\

VGD & $3.2{\times}10^{-4}$
& 58.42 & 72.11 & 69.88 & 65.37
& 56.18 & 68.42 & 61.04 & 73.29 & 65.59 \\

\midrule
\multicolumn{11}{l}{\emph{Rank 128}} \\
Standard & $0$
& 80.77 & 87.81 & 85.48 & 83.89
& 76.76 & 89.07 & 74.00 & 94.04 & 83.98 \\

MFLD & $10^{-5}$
& 80.33 & 87.54 & 85.11 & 83.25
& 76.38 & 88.42 & 73.63 & 93.77 & 83.55 \\

VGD & $0$
& 80.85 & 88.03 & 85.64 & 84.12
& 76.92 & 89.18 & 74.21 & 94.11 & 84.13 \\

VGD & $10^{-7}$
& 81.04 & 88.18 & 85.82 & 84.39
& 77.18 & 89.52 & 74.37 & 94.01 & 84.31 \\

VGD & $3.2{\times}10^{-7}$
& \textbf{81.46} & \textbf{88.36} & \textbf{85.97}
& \textbf{84.70} & \textbf{77.41} & \textbf{89.81}
& \textbf{74.48} & \textbf{94.16} & \textbf{84.54} \\

VGD & $10^{-6}$
& 80.92 & 88.06 & 85.86 & 84.24
& 77.09 & 89.64 & 74.11 & 93.83 & 84.22 \\

VGD & $10^{-5}$
& 80.04 & 87.11 & 84.93 & 83.61
& 75.84 & 88.56 & 72.92 & 93.28 & 83.29 \\

VGD & $3.2{\times}10^{-5}$
& 74.49 & 80.39 & 76.18 & 76.17
& 72.64 & 81.44 & 67.12 & 88.58 & 77.13 \\

VGD & $10^{-4}$
& 70.19 & 71.76 & 72.54 & 72.10
& 57.09 & 78.79 & 54.76 & 83.14 & 70.05 \\

VGD & $3.2{\times}10^{-4}$
& 61.24 & 68.77 & 64.10 & 62.48
& 50.31 & 69.52 & 49.68 & 67.93 & 61.75 \\

\midrule
\multicolumn{11}{l}{\emph{Rank 256}} \\
Standard & $0$
& 80.23 & 86.99 & 84.39 & 81.54
& 74.85 & 87.94 & 73.40 & 92.69 & 82.75 \\

MFLD & $10^{-5}$
& 78.79 & 86.06 & 84.24 & 80.98
& 73.38 & 86.94 & 72.36 & 91.89 & 81.83 \\

VGD & $0$
& 80.45 & 87.04 & 84.31 & 81.89
& 74.91 & 88.14 & 73.48 & 93.12 & 82.92 \\

VGD & $10^{-7}$
& 80.72 & 87.28 & 84.57 & 82.21
& 75.02 & 88.22 & 73.69 & 92.73 & 83.05 \\

VGD & $3.2{\times}10^{-7}$
& \textbf{80.88} & 87.51 & 84.83 & 82.58
& 75.37 & 88.46 & \textbf{73.76} & 93.04 & 83.30 \\

VGD & $10^{-6}$
& 80.60 & \textbf{87.69} & \textbf{85.87} & \textbf{83.50}
& \textbf{75.65} & \textbf{88.51} & 73.18
& \textbf{93.34} & \textbf{83.54} \\

VGD & $10^{-5}$
& 76.21 & 82.48 & 81.57 & 77.09
& 71.06 & 83.01 & 68.47 & 87.84 & 78.47 \\

VGD & $3.2{\times}10^{-5}$
& 64.12 & 69.15 & 67.88 & 66.31
& 62.17 & 76.20 & 61.83 & 76.42 & 68.01 \\

VGD & $10^{-4}$
& 54.09 & 70.44 & 60.61 & 64.27
& 57.91 & 65.37 & 45.18 & 63.84 & 60.21 \\

VGD & $3.2{\times}10^{-4}$
& 46.84 & 61.29 & 52.43 & 55.18
& 44.77 & 57.63 & 41.28 & 52.71 & 51.52 \\

\bottomrule
\end{tabular}
\end{table}

\begin{table}[t]
\centering
\footnotesize
\caption{\textbf{VGD accuracy across model sizes and candidate
$\lambda$ values}. Best results for each model are shown in bold.}
\label{tab:llm_lambda_model_size}

\begin{tabular}{@{}lccc@{}}
\toprule
$\lambda$
& \textbf{Llama-3-8B}
& \textbf{Llama-3.2-3B}
& \textbf{Llama-3.2-1B} \\
\midrule

$10^{-7}$
& \textbf{85.42}
& 80.73
& 47.81 \\

$10^{-6}$
& 85.24
& \textbf{80.94}
& 60.92 \\

$10^{-5}$
& 84.62
& 80.43
& \textbf{66.34} \\

\bottomrule
\end{tabular}
\end{table}

\subsection{Kernel Choice for LoRA Averaging}
\label{app: kernel ablation}

A key implementation choice is \emph{how} to define a particle in the \ac{VGD} kernel.
Each LoRA layer has two matrices $\mathbf{A} \in \mathbb{R}^{r \times d_{\mathrm{in}}}$ and $\mathbf{B} \in \mathbb{R}^{d_{\mathrm{out}} \times r}$, giving $r$ rank-1 components.
A perhaps natural choice is to treat $\mathbf{A}[i,:]$ and $\mathbf{B}[:,i]$ as \textbf{separate}, using a kernel of the form
$$
k_A(\mathbf{A}[i,:] , \tilde{\mathbf{A}}[i,:]) + k_B(\mathbf{B}[i,:] , \tilde{\mathbf{B}}[i,:]) .
$$
The separated kernel discards the cross-terms between $A$ and $B$.
On the other hand, a more cautious choice could be to \emph{concatenate} the two components into a single vector $\bm{\theta}_i = (\mathbf{A}[i,:] \| \mathbf{B}[:,i]) \in \mathbb{R}^{d_{\mathrm{in}} + d_{\mathrm{out}}}$ and compute a \textbf{joint} kernel, of the form
$$
k\left( (\mathbf{A}[i,:] \| \mathbf{B}[:,i])  , (\tilde{\mathbf{A}}[i,:] \| \tilde{\mathbf{B}}[:,i])  \right) .
$$
Such a joint kernel arguably more correctly proxies the functional distance $\|\Delta\mathbf{W}_i - \Delta\mathbf{W}_j\|_F$ through $\|\bm{\theta}_i - \bm{\theta}_j\|^2 = \|\mathbf{a}_i - \mathbf{a}_j\|^2 + \|\mathbf{b}_i - \mathbf{b}_j\|^2$.
Both formulations have identical computational cost $O(r^2 D)$ per layer per step; the separated kernel is not a cheaper approximation but a weaker kernel at the same cost.

\Cref{tab:particle_ablation} quantifies the impact of this kernel choice across three model sizes.
In all cases, the joint kernel offered improved performance compared to the separate kernel.
The degradation from separated particles scales inversely with model size, likely because smaller models have less capacity to absorb misdirected repulsion.
All subsequent experiments used the joint kernel.

\begin{table}[t!]
\caption{\textbf{Particle-definition ablation on the final evaluation split}
(accuracy, \%).
All runs use single seed and rank-$32$ adapters.
The joint kernel acts on
$(\mathbf{A}[i,:]\mid\mathbf{B}[:,i])$,
while the separate kernel sums the kernels over
$\mathbf{A}$ and $\mathbf{B}$.
$\Delta_{\mathrm{MFLD}}$ is the change in eight-task average accuracy
relative to MFLD.
Best task and average accuracies within each model block are shown in bold.}
\label{tab:particle_ablation}

\centering
\footnotesize
\setlength{\tabcolsep}{3pt}

\resizebox{\linewidth}{!}{%
\begin{tabular}{@{}lll*{10}{c}@{}}
\toprule
\textbf{Model}
& \textbf{Kernel}
& $\boldsymbol{\lambda}$
& \textbf{SIQA}
& \textbf{PIQA}
& \textbf{Wino}
& \textbf{OBQA}
& \textbf{ARC-c}
& \textbf{ARC-e}
& \textbf{BoolQ}
& \textbf{Hella}
& \textbf{Avg}
& $\boldsymbol{\Delta}_{\mathrm{MFLD}}$ \\
\midrule

\multirow{3}{*}{Llama-3-8B}
& MFLD
& $10^{-5}$
& 80.46 & 87.82 & 86.28 & \textbf{87.24}
& \textbf{79.47} & 89.58 & \textbf{75.29} & 95.37
& 85.19 & \textemdash \\

& VGD, joint
& $10^{-7}$
& \textbf{80.93} & \textbf{89.02} & \textbf{86.43} & 87.18
& 79.21 & \textbf{90.97} & 74.03 & \textbf{95.59}
& \textbf{85.42} & $+0.23$ \\

& VGD, separate
& $10^{-7}$
& 79.61 & 87.46 & 84.92 & 85.66
& 77.89 & 89.47 & 72.31 & 94.38
& 83.96 & $-1.23$ \\

\midrule

\multirow{3}{*}{Llama-3.2-3B}
& MFLD
& $10^{-5}$
& 78.30 & 84.71 & 80.58 & 79.20
& 73.89 & 85.31 & 70.24 & 91.87
& 80.51 & \textemdash \\

& VGD, joint
& $10^{-6}$
& \textbf{78.84} & \textbf{85.17} & \textbf{80.91} & \textbf{80.14}
& \textbf{74.02} & \textbf{85.63} & \textbf{70.71} & \textbf{92.12}
& \textbf{80.94} & $+0.43$ \\

& VGD, separate
& $10^{-6}$
& 74.12 & 80.36 & 75.94 & 73.07
& 64.82 & 77.31 & 62.88 & 84.35
& 74.11 & $-6.41$ \\

\midrule

\multirow{3}{*}{Llama-3.2-1B}
& MFLD
& $10^{-5}$
& 69.96 & 75.41 & 67.72 & 66.60
& 51.37 & 69.78 & 62.29 & 61.19
& 65.54 & \textemdash \\

& VGD, joint
& $10^{-5}$
& \textbf{70.88} & \textbf{76.20} & \textbf{68.27} & \textbf{67.42}
& \textbf{52.04} & \textbf{70.61} & \textbf{62.58} & \textbf{62.68}
& \textbf{66.34} & $+0.80$ \\

& VGD, separate
& $10^{-5}$
& 62.31 & 65.74 & 58.26 & 55.43
& 42.18 & 60.47 & 51.06 & 47.82
& 55.41 & $-10.13$ \\

\bottomrule
\end{tabular}%
}
\end{table}

\FloatBarrier

\subsection{Results for Other LLMs}
\label{app: choice of LLM}

To assess generality, we additionally evaluate on two smaller models within the same architecture family; LLaMA-3.2-1B \citep{meta2024llama3.2-1b} and LLaMA-3.2-3B \citep{meta2024llama3.2-3b}.
\Cref{tab:finding2_single} reports results for a single seed.
In this single-seed comparison, \ac{VGD} matches or out-performs \ac{MFLD} on all three model sizes, with the largest margin on the smallest model; we caution that the margins are small and would benefit from multi-seed replication.

\begin{table}[t]
\caption{\textbf{VGD vs MFLD: single rank-$32$ adapter across model sizes}
(single seed).
MFLD uses the fixed $\lambda=10^{-5}$.
For VGD, we report the best observed candidate for each model in
\Cref{tab:llm_lambda_model_size}.
In this single-seed comparison, \ac{VGD} achieves higher average accuracy
than \ac{MFLD} at every model size, with the gap increasing as model size
decreases.}
\label{tab:finding2_single}

\centering
\setlength{\tabcolsep}{3.2pt}
\footnotesize

\resizebox{\textwidth}{!}{%
\begin{tabular}{@{}lll*{9}{c}@{}}
\toprule
\textbf{Model}
& \textbf{Method}
& $\boldsymbol{\lambda}$
& \textbf{SIQA}
& \textbf{PIQA}
& \textbf{Wino}
& \textbf{OBQA}
& \textbf{ARC-c}
& \textbf{ARC-e}
& \textbf{BoolQ}
& \textbf{Hella}
& \textbf{Avg} \\
\midrule

\multirow{2}{*}{\scriptsize Llama-3-8B}
& MFLD
& $10^{-5}$
& 80.46
& 87.82
& 86.28
& \textbf{87.24}
& \textbf{79.47}
& 89.58
& \textbf{75.29}
& 95.37
& 85.19 \\

& \textbf{VGD}
& $10^{-7}$
& \textbf{80.93}
& \textbf{89.02}
& \textbf{86.43}
& 87.18
& 79.21
& \textbf{90.97}
& 74.03
& \textbf{95.59}
& \textbf{85.42} \\

\midrule

\multirow{2}{*}{\scriptsize Llama-3.2-3B}
& MFLD
& $10^{-5}$
& 78.30
& 84.71
& 80.58
& 79.20
& 73.89
& 85.31
& 70.24
& 91.87
& 80.51 \\

& \textbf{VGD}
& $10^{-6}$
& \textbf{78.84}
& \textbf{85.17}
& \textbf{80.91}
& \textbf{80.14}
& \textbf{74.02}
& \textbf{85.63}
& \textbf{70.71}
& \textbf{92.12}
& \textbf{80.94} \\

\midrule

\multirow{2}{*}{\scriptsize Llama-3.2-1B}
& MFLD
& $10^{-5}$
& 69.96
& 75.41
& 67.72
& 66.60
& 51.37
& 69.78
& 62.29
& 61.19
& 65.54 \\

& \textbf{VGD}
& $10^{-5}$
& \textbf{70.88}
& \textbf{76.20}
& \textbf{68.27}
& \textbf{67.42}
& \textbf{52.04}
& \textbf{70.61}
& \textbf{62.58}
& \textbf{62.68}
& \textbf{66.34} \\

\bottomrule
\end{tabular}%
}

\vspace{-4pt}
\end{table}

\FloatBarrier

\subsection{Computational Complexity and Timings}
\label{app: timing}

\paragraph{Time and space complexity.}
Let $p$ denote the dimension of a single particle $\bm{\theta}_i$, $B$ the minibatch size, $e$ the output dimension of the model, $T$ the number of optimisation steps, and $c_B$ the cost of one forward--backward pass of a single model on a minibatch of size $B$.
Each step of \ac{MFLD} costs $O(m c_B + mp)$ in time, being one forward--backward pass per particle plus the parameter update and the Gaussian perturbation; because of minibatching, this does not depend on the size $n$ of the dataset.
\ac{VGD} additionally requires the $m \times m$ kernel matrix, the median-heuristic length-scale, and the two products appearing in \eqref{eq: vgd ode}, at total cost $O(m^2 p)$ per step, for an overall per-step cost of $O(m c_B + m^2 p)$.
(For the Gaussian kernel, $(\nabla_2 \mathbf{K}) \mathbf{1}$ can be evaluated without forming an $m \times m \times p$ array, since $\sum_{j} (\nabla_2 k)(\bm{\theta}_i , \bm{\theta}_j) = 2 \ell^{-2} \sum_{j} k(\bm{\theta}_i , \bm{\theta}_j) (\bm{\theta}_i - \bm{\theta}_j)$.)
The kernel term is of lower order than the gradient computation whenever $m p \lesssim c_B$, as is the case in our \ac{LoRA} experiments where $c_B$ is dominated by the frozen base model.
\ac{FVGD} evaluates the kernel on the minibatch outputs, which lie in $\mathbb{R}^{Be}$, at cost $O(m^2 B e)$ that is independent of $p$, and the pull-back in \eqref{eq: proj_fvgd} is a vector--Jacobian product costing $O(c_B)$ per particle, so that the per-step cost is $O(m c_B + m^2 B e)$.
In each case the total time is $T$ times the per-step cost.
For space, all three methods store the $m$ particles, i.e. $O(mp)$ numbers (plus optimiser state, a constant multiple of this), with \ac{VGD} adding $O(m^2)$ for the kernel matrix and \ac{FVGD} adding $O(m^2 + mBe)$; the additional memory required by the simple strategy for computing variational gradients is discussed in \Cref{app: var grad}.
At test time, an ensemble of $m$ models requires $m$ forward passes, which can be run in parallel, whereas a \ac{LoRA} averaged adapter can be merged into a single adapter of rank $R = mr$ and therefore has the test-time cost of a standard rank-$R$ \ac{LoRA}.

\paragraph{Timings.}
The use of \ac{VGD} in place of \ac{MFLD} adds a kernel evaluation of cost $O(m^2 D)$ per layer per step, against a forward--backward pass whose cost is $O(mD)$ in the adapter parameters but is dominated by the frozen base model. The resulting overhead is small, as reported in \Cref{tab:compute_cost}.

\begin{table}[t!]
\caption{\textbf{Computational cost} on Llama-3-8B (rank 32, 3 epochs, 8$\times$H100).
The use of \ac{VGD} adds only ${\sim}6\%$ wall-clock time compared to \ac{MFLD}.}
\label{tab:compute_cost}
\centering
\footnotesize
\begin{tabular}{@{}l l l c c@{}}
\toprule
\textbf{Method} & \textbf{Per-step cost} & \textbf{Mechanism} & \textbf{Wall-clock} & \textbf{Overhead} \\
\midrule
Standard LoRA & $O(RD)$ & gradient only & ${\sim}5$h\,30m & \textemdash \\
MFLD & $O(mrD)$ & $+$\,\texttt{randn} & ${\sim}5$h\,30m & ${\sim}0\%$ \\
VGD & $O(mrD+m^2rD)$ & $+$\,kernel repulsion & ${\sim}5$h\,50m & ${\sim}6\%$ \\
\bottomrule
\end{tabular}

\vspace{4pt}
{\footnotesize
$m=32$, $r=1$, $R=mr=32$, and
$D=d_{\mathrm{in}}+d_{\mathrm{out}}\approx11\text{k}$.}
\vspace{-4pt}
\end{table}

\FloatBarrier

\subsection{FVGD Diagnosis}
\label{app: fvgd diagnosis}

\begin{figure}[t!]
    \hfill
    \begin{subfigure}[t]{0.45\linewidth}
        \centering
        \includegraphics[width=\linewidth]{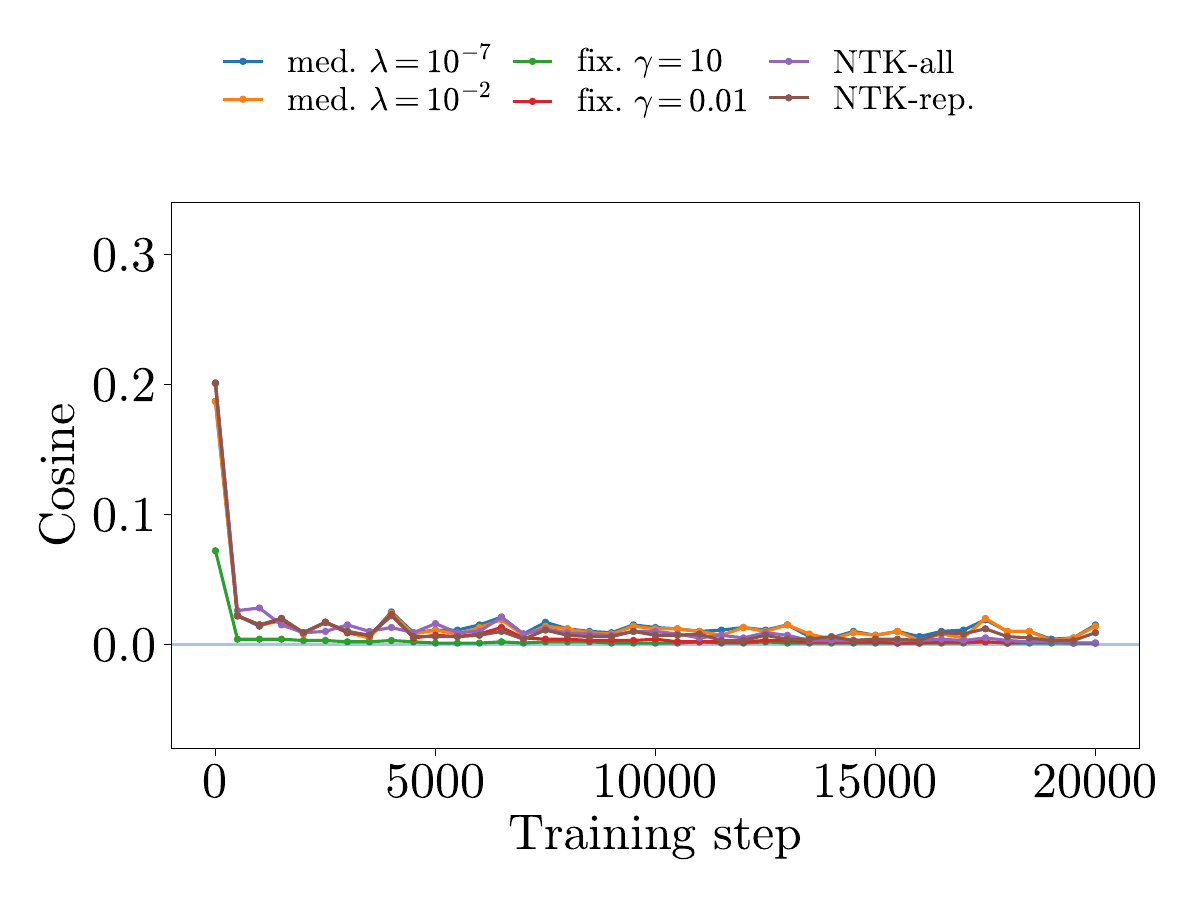}
        \caption{Cosine similarity \eqref{eq: cosine similarity} between the Jacobian-pulled-back change and the target functional direction}
        \label{fig:fvgd_diagnosis_pullback_alignment}
    \end{subfigure}
    \hfill
    \begin{subfigure}[t]{0.45\linewidth}
        \centering
        \includegraphics[width=\linewidth]{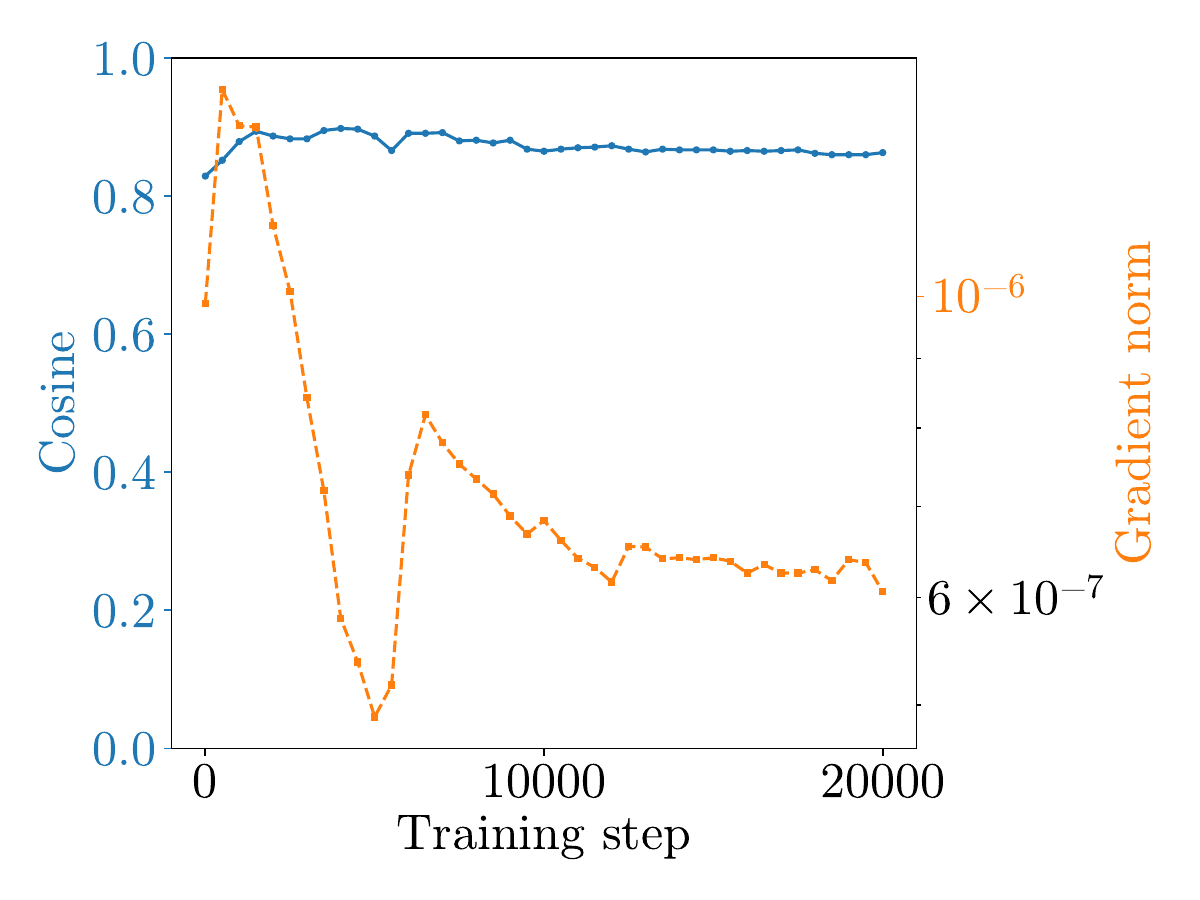}
        \caption{NTK repulsion alignment and its negligible parameter-space norm}
        \label{fig:fvgd_diagnosis_ntk_repulsion_failure}
    \end{subfigure}
    \caption{Understanding the poor performance of \ac{FVGD}.}
\end{figure}

The aim of this appendix is to understand the underwhelming performance of \ac{FVGD} in the experiments reported in \Cref{sec: ensembles}.
The key issue is whether a sensible parameter update is induced from the proposed function-space direction: 
$$
 \bm{\theta}_i^{t+1} = \bm{\theta}_i^t + \epsilon J_i^\top\bm{\phi}_i, \qquad J_i = D_{\bm{\theta}}F_n(\bm{\theta}_i^t),
$$
then, to first order,
$$
F_n(\bm{\theta}_i^{t+1}) -  F_n(\bm{\theta}_i^t)  =  \epsilon J_iJ_i^\top\bm{\phi}_i + O(\epsilon^2).
$$
This shows that the induced functional update is warped by a parameterisation-dependent map $J_iJ_i^\top$, and the \ac{FVGD} dynamics in function space are not independent of the model parameterisation.
The same issue is discussed in \citet[][Section~3.1.3]{wang2018function}, where it is argued that the Jacobian is constant in $\theta$ (appealing to the neural tangent kernel infinite-width heuristic of \citealp{jacot2018neural}).
More precisely, the approximation considered by
\citet{wang2018function} is that the tangent kernel $JJ^\top$, rather than
the Jacobian $J$ itself, remains approximately constant during training.
This makes the parameterisation-induced function-space geometry approximately
time-independent, but it does not imply that $JJ^\top$ is the identity or
that $JJ^\top\bm{\phi}$ is aligned with $\bm{\phi}$.
To investigate, for each recorded iteration, we formed the block-diagonal Jacobian and the stacked functional direction
\begin{equation*}
    J^t
    =
    \operatorname{diag}
    \left(
    J_1^t,\ldots,J_m^t
    \right),
    \qquad
    \bm{\phi}^t
    =
    \operatorname{col}
    \left(
    \bm{\phi}_1^t,\ldots,\bm{\phi}_m^t
    \right)
\end{equation*}
and calculated the cosine similarity alignment diagnostic  
\begin{equation}
    \frac{
    \left\langle
    J^t(J^t)^\top\bm{\phi}^t,
    \bm{\phi}^t
    \right\rangle
    }{
    \left\|
    J^t(J^t)^\top\bm{\phi}^t
    \right\|_2
    \left\|
    \bm{\phi}^t
    \right\|_2
    } , \label{eq: cosine similarity}
\end{equation}
with results shown in
\Cref{fig:fvgd_diagnosis_pullback_alignment}.
After the initial transient, the cosine similarity remains close to zero in all six diagnostic configurations. The mean values over the recorded iterations with $t \geq 1000$ range from approximately $0.002$ to $0.012$. This shows that, in these diagnostic runs, the linearised functional change induced by the raw Jacobian pullback is poorly aligned with the intended Euclidean function-space direction.
For the NTK-repulsion diagnostic in \Cref{fig:fvgd_diagnosis_ntk_repulsion_failure}, we instead compute the regularised inverse pullback
\begin{equation*}
    \bm{\beta}_{\mathrm{rep}}^t
    =
    \left(
    (J^t)^\top J^t+\rho I
    \right)^{-1}
    (J^t)^\top\bm{\phi}_{\mathrm{rep}}^t,
    \qquad
    \rho=10^{-3}.
\end{equation*}
The inverse pullback increases the mean repulsive-direction cosine similarity to approximately $0.875$. However, the mean parameter-space norm is only approximately $7.1\times10^{-7}$, compared with a mean supervised-gradient norm of approximately $1.14\times10^{-1}$ in the same run. Thus, improving the alignment alone does not make the repulsive update sufficiently large to compete with the supervised update.
These experiments are supplementary mechanistic diagnostics using the same narrow Spiral architecture as the main experiment, but with an augmented projection set and an RBF kernel on log probabilities.